\def\1{\bm{1}}
\DeclareMathAlphabet{\mathsfit}{\encodingdefault}{\sfdefault}{m}{sl}
\SetMathAlphabet{\mathsfit}{bold}{\encodingdefault}{\sfdefault}{bx}{n}
\newcommand{\Var}{\mathrm{Var}}
\newcommand{\ourmethod}{FM-$\mathring{\Delta}$}
\newcommand{\ourname}{Simplex-to-Euclidean Flow Matching}
\newtheorem{proposition}{Proposition}
\newtheorem{lemma}{Lemma}
\newtheorem{theorem}{Theorem}
\begin{document}

\twocolumn[

\aistatstitle{
Simplex-to-Euclidean Bijections for Categorical Flow Matching
}

\aistatsauthor{ Bernardo Williams$^1$ \And  Victor M. Yeom-Song$^2, ^3$ \And Marcelo Hartmann$^1$ \And  Arto Klami$^1$ }

\aistatsaddress{$^1$Department of Computer Science, University of Helsinki \\ $^2$Department of Computer Science, Aalto University \\ $^3$ELLIS Institute Finland } ]

\begin{abstract}
We propose a method for learning and sampling from probability distributions supported on the simplex. Our approach maps the open simplex to Euclidean space via smooth bijections, leveraging the Aitchison geometry to define the mappings, and supports modeling categorical data by a Dirichlet interpolation that dequantizes discrete observations into continuous ones. This enables density modeling in Euclidean space through the bijection while still allowing exact recovery of the original discrete distribution. Compared to previous methods that operate on the simplex using Riemannian geometry or custom noise processes, our approach works in Euclidean space while respecting the Aitchison geometry, and achieves competitive performance on both synthetic and real-world data sets.
\end{abstract}

\section{INTRODUCTION}
\label{sec:intro}
We study the problem of learning and generating samples from probability distributions supported on the unit simplex, a setting that naturally arises when working with compositional data (vectors of non-negative components that sum 1o one). Learning distributions on the simplex provides a natural framework for many real-world applications. For example, in computational biology, sequences can be encoded as categorical variables e.g.\ for conditional generation of  DNA sequences \citep{Avdeyev2023} or proteins \citep{Campbell2024}, and compositional data naturally arises in geology, chemistry, design and economics \citep{Aitchison1982, Chereddy2025, Diederen2025}.

There are two main ways of modeling categorical data. The first are discrete-state models that manipulate directly the categorical states, often by modeling transition dynamics. This family includes discrete flow and diffusion models~\citep{Austin2021, Campbell2024, Gat2024, Sahoo2025} as well as masked diffusion models~\citep{Sahoo2024}. The other category adapts continuous-state models, such as standard diffusion and flow models, to work with categorical observations by continuous relaxations. We work within the latter category, to ease the use of established continuous models with well-understood learning dynamics and good implementations \citep{Karras2022,Lipman2024} for discrete data.

The continuous relaxation models can be further divided into two broad categories: Some operate directly on the simplex, while others avoid the simplex entirely by working in the ambient space $\mathbb{R}^K$, gradually moving continuous samples toward the vertices to recover discreteness~\citep{Chen2023, Eijkelboom2024}. Within the simplex-based models, two challenges arise: (i) handling the boundary, where discrete data lie, and (ii) accounting for the simplex’s non-Euclidean geometry. Existing approaches tackle these in different ways: Some define distributions directly on the simplex, either via custom marginals for diffusion processes~\citep{Avdeyev2023, Floto2023, Tae2025} or vector fields~\citep{Stark2024, Dunn2024, Tang2025}, or by mapping the simplex to the sphere through bijections~\citep{Davis2024, Cheng2024, Cheng2025}. We build on the methods that operate on the simplex, focusing on two related aspects: How to account for the geometry of the simplex and the discrete data at its boundary in a principled way.

\begin{figure}
    \centering
    \includegraphics[width=0.99\linewidth]{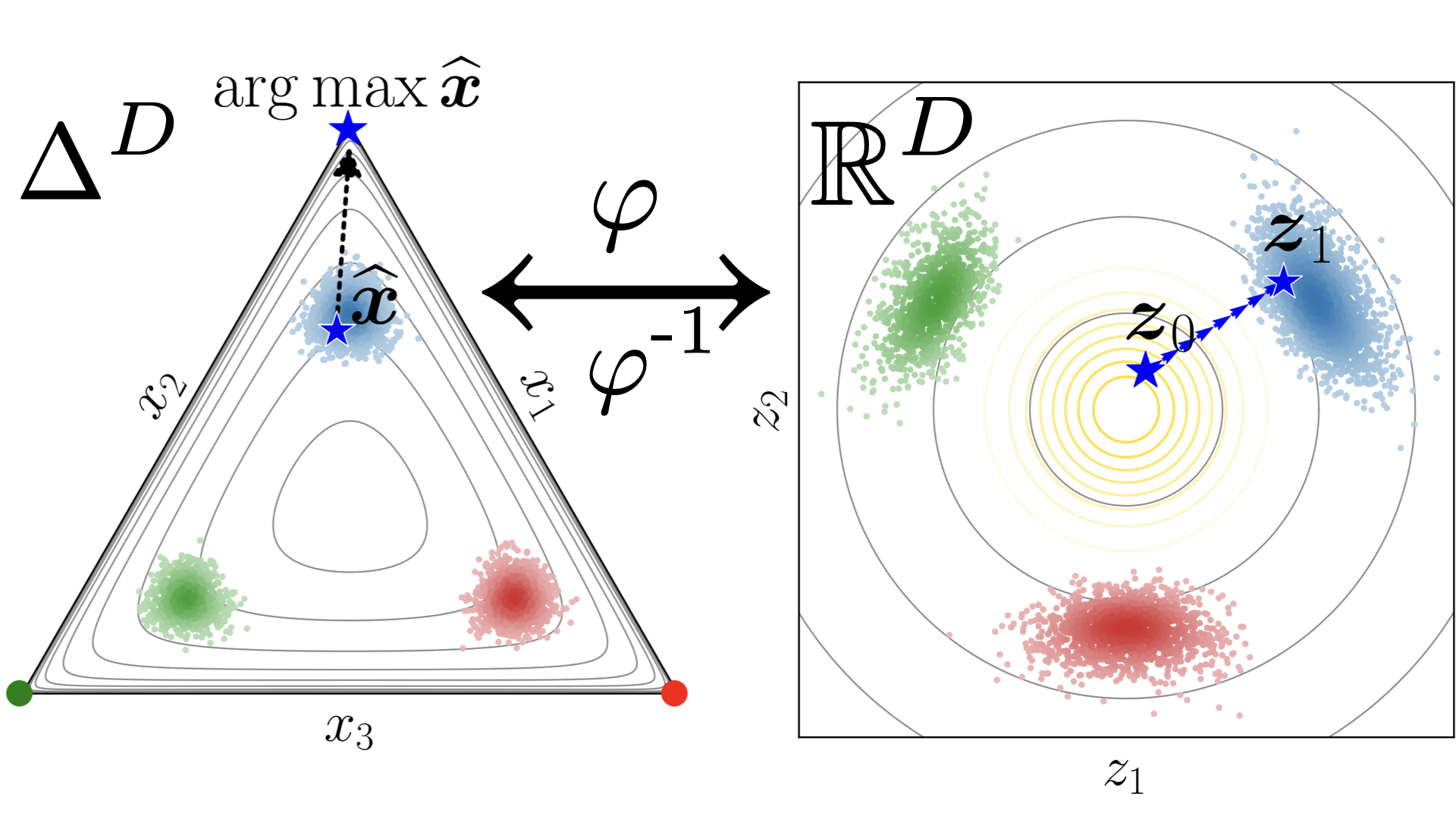}
    \caption{We stochastically interpolate categorical observations (color) to distributions on the interior of a simplex (left). The resulting Dirichlet mixture is transformed to Euclidean space (right) with a bijection ${\varphi}$, enabling use of standard continuous generative models, like conditional flow matching. Discrete samples are obtained by composition of the inverse transformation ${\varphi}^{-1}$ and $\arg\max$ operation.
    }
    \label{fig:fig1}
\end{figure}

Concretely, we propose a new approach for modeling categorical data with continuous generative models, aiming for conceptual and implementation simplicity to best leverage the existing continuous tools. We map the \emph{interior} of the simplex to Euclidean space using a smooth bijection drawn from compositional data analysis and the Aitchison geometry, a geometry induced by the logratio of the components \citep{Aitchison1982}. We can then train a standard continuous generative model in Euclidean space. For compositional data this is enough to account for the simplex geometry, but 
for categorical observations that lie at the boundary of the simplex we need additional tools. They are not covered by our bijection, but for training the model we can map them into interior points via a Dirichlet interpolation scheme, a stochastic generalization of a recent dequantization approach by \citet{Chereddy2025}. We do this so that we can retrieve the original category by a simple $\arg\max$-operation, which means we can also easily convert continuous samples into discrete ones.
Figure~\ref{fig:fig1} illustrates the method.

We demonstrate the approach using Flow Matching \citep{Lipman2023, Albergo2023} as the continuous generative model, and propose two alternative bijections: the stick-breaking transform \citep{Aitchison1982} and the isometric logratio transform \citep{Egozcue2003}. Both define a geometry on the simplex, and explicitly connect this geometry to Euclidean space. Moreover, they are computationally lightweight and easy to implement.

We establish several theoretical results and empirically validate the method on standard benchmarks, demonstrating highly competitive performance within the continuous relaxations and outperforming discrete-state models in low-dimensional problems.

\section{BACKGROUND}
\label{sec:prelim}
For $K$ classes, the simplex is a space of $K-1$ dimensions. For notational simplicity, we denote this by $D=K-1$.
Discrete data is denoted $\boldsymbol{c}\in\{ \boldsymbol{e}_1,\ldots,\boldsymbol{e}_{K}\}$ and generated samples with a hat, e.g. $\hat{\boldsymbol{c}}$, to distinguish them from true data samples. 
The (closed) simplex is defined $\Delta^{D} := \{ \boldsymbol{x} \in \mathbb{R}^{K} : x_i \ge 0,\ \sum_{i=1}^{K} x_i = 1 \}$,  and the open simplex as $\mathring{\Delta}^D := \{ \boldsymbol{x} \in \Delta^{D} : x_i > 0\ \forall i \}$. The boundary $\partial \Delta^{D} := \Delta^{D} \setminus \mathring{\Delta}^D$ consists  of all borders where at least one coordinate is zero. We will also make reference to the unit sphere, defined as $\mathbb{S}^{D} := \{ \boldsymbol{x} \in \mathbb{R}^{K} : \sum_{k=1}^{K} x_k^2 = 1 \}$, and the positive orthant of the sphere $\mathbb{S}^D_{+} := \{ \boldsymbol{x} \in \mathbb{S}^D : x_i \ge 0 \ \forall i \}$. We use the notation $\arg\max \boldsymbol{x} := \boldsymbol{e}_j$ with $j = \arg\max_i x_i$ to denote the one-hot vector at the maximizing index.

Equipping $\mathring{\Delta}^D$ with the Fisher information metric yields a $D$-dimensional Riemannian manifold. Intuitively, a Riemannian metric specifies an inner product on the tangent space at each point, which in turn induces geodesics (shortest paths) and distances (see e.g.\ \citet{Lee2018}). For the Fisher metric on the simplex, both geodesics and geodesic distances admit closed-form expressions. 
Similarly, the unit sphere $\mathbb{S}^{D}$ is a $D$-dimensional Riemannian manifold endowed with the canonical metric induced by its embedding in $\mathbb{R}^{K}$.

\subsection{Conditional Flow Matching} \label{sec:cfm}
We use Conditional Flow Matching (CFM) \citep{Lipman2023, Albergo2023}
as the example (Euclidean) generative model, and hence briefly review it here. 

CFM  is a continuous-time normalizing flow for continuous data, where a vector field $\boldsymbol{u}_t(\boldsymbol{x}):\mathbb{R}^D\times [0,1]\to \mathbb{R}^D$  defines the ODE  $\frac{d \boldsymbol{x}_t}{dt} = \boldsymbol{u}_t(\boldsymbol{x}_t)$, with marginals $\boldsymbol{x}_t\sim p_t$ related to $\boldsymbol{u}_t$ by the continuity equation (Eq.~26 in \citealt{Lipman2023}). The vector field transports samples from a simple base distribution $p_0$ to the target distribution $p_1=p_{\mathrm{data}}$. Since the true velocity field $\boldsymbol{u}_t$ is not available, training instead regresses a parametric model $\boldsymbol{v}^\theta_t(\boldsymbol{x}_t)$ toward the conditional velocity $\boldsymbol{u}_t(\boldsymbol{x}_t \mid \boldsymbol{x}_0,\boldsymbol{x}_1)$ along paths interpolating between $p_0$ and $p_1$, which leads to the CFM objective
\begin{equation}
\mathcal{L}_{\mathrm{CFM}}(\theta)
= \mathbb{E}_{\substack{t\sim \mathrm{Unif}(0,1) \\ \boldsymbol{x}_0,\boldsymbol{x}_1\sim\pi(\boldsymbol{x}_0,\boldsymbol{x}_1)}}
  \left\| \boldsymbol{v}^\theta_t\big(\boldsymbol{x}_t\big) - \boldsymbol{u}_t(\boldsymbol{x}_t|\boldsymbol{x}_0,\boldsymbol{x}_1) \right\|_2^2. \label{eq:cfm_loss}
\end{equation}
There are multiple interpolation schemes \cite{Lipman2024}. A simple and commonly used choice is the linear interpolation,  $\boldsymbol{x}_t = (1-t)\boldsymbol{x}_0 + t \boldsymbol{x}_1$, where $\boldsymbol{x}_0 \sim p_0$ and $\boldsymbol{x}_1 \sim p_1$, resulting in target conditional velocity $\boldsymbol{u}_t(\boldsymbol{x}|\boldsymbol{x}_0,\boldsymbol{x}_1)=\dot{\boldsymbol{x}}_t = \boldsymbol{x}_1 - \boldsymbol{x}_0$. 

Two common choices for the coupling $\pi$ of $p_0$ and $p_1$ are the following. The independent coupling draws $(\boldsymbol{x}_0,\boldsymbol{x}_1)$ independently from $\pi(\boldsymbol{x}_0,\boldsymbol{x}_1) = p_0(\boldsymbol{x}_0)p_1(\boldsymbol{x}_1)$ and the optimal transport coupling (OT) constructs an approximation of the optimal transport plan on each minibatch of samples of $p_0$ and $p_1$ \citep{Tong2024}. Concretely, given a batch of samples from each distribution, a cost matrix is computed using Euclidean distances, and a pairing of the samples (transport plan) is obtained that approximately minimizes the total cost while preserving the empirical marginals. The OT coupling can improve training stability and accelerate inference.

\subsection{Riemannian Flow Matching on the Simplex} \label{sec:riem}
To illustrate how previous methods handle the geometry of the simplex, we briefly outline Statistical Flow Matching (SFM)~\citep{Cheng2024, Davis2024}. The basic idea is to map the simplex to a space that is ``easier'' in some computational sense, and learn the flow there, eventually mapping the continuous samples from the model back to discrete observations with another transformation.

SFM transforms the simplex $\Delta^{D}$ to the positive orthant of the unit sphere $\mathbb{S}_+^{D}$, another $D$-dimensional manifold. The geometry of the simplex equipped with the Fisher Information metric is known in closed form~\citep{Miyamoto2024}, and we have the following isomorphism between the simplex with the Fisher-Rao metric and the sphere with its canonical metric:
\begin{align*}
\varphi: \Delta^{D} \to \mathbb{S}_+^{D}, \quad  &\boldsymbol{x}\mapsto \boldsymbol{z} = \sqrt{\boldsymbol{x}}; \\ 
\varphi^{-1}:  \mathbb{S}_+^{D} \to \Delta^{D}, \quad &\boldsymbol{z} \mapsto \boldsymbol{x} = \boldsymbol{z}^2. 
\end{align*}
The transformation gives the change of volume
\begin{equation}
\label{eq:sfm_cov}
    \det \boldsymbol{G}_{\varphi} = \frac{1}{2^{D}} \prod_{i=1}^{K}\frac{1}{\sqrt{x_i}},
\end{equation}
where $ \boldsymbol{G}_{\varphi} = \boldsymbol{J}^\top_{\varphi}\boldsymbol{J}_{\varphi}$ is the pullback metric of the transformation.
Unlike the Fisher–Rao geometry on the simplex, the spherical geometry (exponential–log maps) is well-defined on the boundary of the positive orthant. However, the change-of-variables volume term (Eq.~\ref{eq:sfm_cov}) becomes singular whenever at least one coordinate is zero, so likelihood evaluation is only possible on the open simplex. To address this, SFM uses a lower bound for the categorical likelihood \citep[Eq.~(14)]{Cheng2024}.
Even though working on the sphere avoids some boundary issues, training still requires a Riemannian variant of Flow Matching \citep{Chen2024} with associated geometric machinery. The training objective is
\begin{equation*}
\mathbb{E}_{\substack{t\sim \mathrm{Unif}(0,1) \\ \boldsymbol{x}_0,\boldsymbol{x}_1\sim\pi(\boldsymbol{x}_0,\boldsymbol{x}_1)} }
\left\|\boldsymbol{v}_t^\theta(\boldsymbol{x}_t){-}\mathrm{Log}_{\boldsymbol{x}_t}(\boldsymbol{x}_1)/(1-t)\right\|^2_{g},
\end{equation*}
where $\|\cdot\|g$ is the Riemannian norm, and  $\boldsymbol{x}_t = \mathrm{Exp}_{\boldsymbol{x}_0}( t \boldsymbol{v})$ and $\boldsymbol{v} = \mathrm{Log}_{\boldsymbol{x}_0}(\boldsymbol{x}_1)$ are the exponential and logarithmic maps.

\section{METHOD}
We build on the same idea as SFM, of transforming the simplex to a space that is easier to work on.
Instead of mapping it to another space that still requires Riemannian machinery, we consider transformations that take us to Euclidean space, where all computations are easy and we can directly leverage standard continuous models.
A full bijection from the closed simplex to $\mathbb{R}^D$ obviously does not exist, but our innovation is to construct such a mapping on $\mathring \Delta^D$, the interior of the simplex.
The method operates entirely within  $\mathring \Delta^D$, with information-preserving mappings between the $\mathring \Delta^D$ and the discrete categories. 

The method, coined \emph{\ourname} (\ourmethod), has two main components: (i) A bijection from the open simplex to Euclidean space, for which we provide two concrete alternatives, and (ii) Dirichlet interpolation for handling discrete observations on the boundary, both for lifting them into the open simplex and eventually for mapping generated samples back to discrete categories.

\subsection{Simplex-to-Euclidean bijections} \label{sec:bij}

\paragraph{Compositional Data Analysis}
\citet{Atchison1980, Aitchison1981, Aitchison1982} noted that for compositional data (vectors in the open simplex), the logratio between the components provides a principled method for constructing smooth bijections from the open simplex to Euclidean space. Two classical examples, which will serve as building blocks of the transforms we later propose, are the additive logratio (ALR), for $1\leq k\leq D$ 
\begin{align}
    \mathrm{alr}(\boldsymbol{x})_k :=  \log\tfrac{x_k}{x_{K}}, \ \mathrm{alr}^{-1}(\boldsymbol{z}) =  \mathrm{softmax}([\boldsymbol{z},0]),
    \label{eq:alr}
\end{align}
and the multiplicative logratio (MLR)
\begin{align*}
    \mathrm{mlr}(\boldsymbol{x})_k :=  \log\tfrac{x_k}{1 - \sum_{i=1}^{k} x_{i}}, \
    \mathrm{mlr}^{-1}(\boldsymbol{z})_k =  \frac{e^{z_k}}{\prod_{i=1}^k \big(1+e^{z_i}\big)}.
\end{align*}
The last entry depends on the others, with $x_{K} = 1 - \sum_{i=1}^{D} x_i$.
Note that both  transformations depend on the ordering of the components of $\boldsymbol{x}$, best seen by the explicit reference to an arbitrary (last) component $x_{K}$ in $\mathrm{alr}(\boldsymbol{x})$. Permuting the inputs produces a different map and consequently also a different geometry.

These transformations induce a geometry on $\mathring \Delta^D$ that differs from the Fisher–Rao metric. The Aitchison geometry equips the open simplex with the inner product \citep{Aitchison1983}
\begin{equation*}
\langle \boldsymbol{x}, \boldsymbol{y} \rangle_A := \frac{1}{2K} \sum_{i,j=1}^{K} 
\log\frac{x_i}{x_j}\log\frac{y_i}{y_j},
\end{equation*}
which captures the relative structure of compositions. Addition in this geometry is defined through a perturbation, $\boldsymbol{x} \oplus \boldsymbol{y} = C(x_1 y_1, \dots, x_K y_K)$, where $C(\boldsymbol{z}) = \boldsymbol{z}/\sum_k z_k$ ensures the result remains in the simplex. Distances and inner products are invariant under such perturbation, 
highlighting that the information is carried by the ratios. These properties make the Aitchison geometry particularly well-suited for data in $\mathring \Delta^D$.

The naive ALR and MLR mappings could in principle be used as such (see the Supplement~\ref{app:alr_mlr}), and recently \citet{Chereddy2025} indeed used ALR for modeling discrete data (for CAD generation) with diffusion models, without accounting for the order invariance in any way. We will next introduce two concrete mappings; one is invariant to the order, resolving the fundamental limitation, whereas the other improves on MLR by aligning it to the center point on each space. 

\paragraph{Isometric logratio transform (ILR)} 
We consider the isometric logratio transform \citep{Egozcue2003} 
\begin{equation} \label{eq:ilr}
    \begin{aligned}
    \varphi: \mathring\Delta^{D} \to \mathbb{R}^{D},\quad  &\boldsymbol{x}\mapsto \boldsymbol{z}= \boldsymbol{H} \log \boldsymbol{x}, \\
    \varphi^{-1}:  \mathbb{R}^{D} \to \mathring\Delta^{D}, \quad &\boldsymbol{z}\mapsto \boldsymbol{x} = \mathrm{softmax}( \boldsymbol{H}^\top \boldsymbol{z} ),
    \end{aligned}
\end{equation}
where $\boldsymbol{H}\in \mathbb{R}^{D\times K}$ a Helmert matrix \citep{Lancaster1965} and the rows of $\boldsymbol{H}$ form an orthonormal basis that spans the tangent space of the simplex $T_{\boldsymbol{x}}\Delta^D=\{\boldsymbol{x}\in \mathbb{R}^{K}: \sum_{k=1}^K x_k=0\}$.
Characterization of the Helmert matrix, proof of $\mathcal{O}(K)$ complexity and other details are provided in the Supplement~\ref{app:ilr}.
The volume change is
\begin{equation*}
    \det \boldsymbol{J}_\varphi = \det(\boldsymbol{H}_{1:D,1:D}) \prod_{i=1}^{K} \frac{1}{x_i}.
\end{equation*}
We use ILR because it is invariant to the category order, always giving the same mapping and hence geometry. Moreover it is an isometry from the simplex with the Aitchison geometry to Euclidean space as stated in Theorem~\ref{thm:isometry}.
A direct consequence is that the paths traced by a Flow Matching model in Euclidean space are geometrically consistent with the Aitchison geometry, following the Aitchison geodesics.
\begin{theorem}[Isometry, \citep{Egozcue2003}]\label{thm:isometry}
Let $\langle\cdot,\cdot\rangle_A$ denote the Aitchison inner product on $\mathring \Delta^{D}$ and $\langle\cdot,\cdot\rangle_2$ the standard inner product on $\mathbb{R}^{D}$. For a Helmert matrix $\boldsymbol H\in\mathbb{R}^{D\times K}$, the ILR map satisfies
\begin{equation*}
    \langle \boldsymbol{x}, \boldsymbol{y} \rangle_A \,=\, \langle \varphi(\boldsymbol{x}), \varphi(\boldsymbol{y}) \rangle_2 \quad \forall \boldsymbol{x},\boldsymbol{y}\in \mathring\Delta^D,
\end{equation*}
and, in particular, the ILR map is an isometry between $(\mathring\Delta^{D},\langle\cdot,\cdot\rangle_A)$ and $(\mathbb{R}^{D},\langle\cdot,\cdot\rangle_2)$. 
\end{theorem}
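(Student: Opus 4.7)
The plan is to reduce both sides of the claimed identity to the same explicit bilinear form in $\log\boldsymbol{x}$ and $\log\boldsymbol{y}$, and then conclude that the isometry statement follows by polarization. The Euclidean side is
\begin{equation*}
\langle\varphi(\boldsymbol{x}),\varphi(\boldsymbol{y})\rangle_2
= (\log\boldsymbol{x})^\top \boldsymbol{H}^\top\boldsymbol{H}\,(\log\boldsymbol{y}),
\end{equation*}
so the first step is to identify $\boldsymbol{H}^\top\boldsymbol{H}$. Since the rows of $\boldsymbol{H}$ form an orthonormal basis of the tangent space $T=\{\boldsymbol{v}\in\mathbb{R}^K:\sum_k v_k=0\}$, the matrix $\boldsymbol{H}^\top\boldsymbol{H}$ is the orthogonal projector onto $T$, which equals $\boldsymbol{I}-\tfrac{1}{K}\boldsymbol{1}\boldsymbol{1}^\top$. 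This is the key structural fact, and I would justify it briefly from the characterization of the Helmert matrix cited in the supplement.

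Next I would expand both sides in components. Using the projector identity above,
\begin{equation*}
\langle\varphi(\boldsymbol{x}),\varphi(\boldsymbol{y})\rangle_2
=\sum_{i=1}^{K}\log x_i\log y_i-\tfrac{1}{K}\Bigl(\sum_{i}\log x_i\Bigr)\Bigl(\sum_{j}\log y_j\Bigr).
\end{equation*}
On the Aitchison side I would expand the squared logratio factor and split the sum over $i,j$ into four pieces, each of which is either $K\sum_i \log x_i\log y_i$ or a product of marginal log-sums; collecting terms and dividing by $2K$ yields the same expression. Hence the two inner products coincide pointwise.

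Finally, the isometry claim follows: preservation of the inner product implies preservation of the induced Aitchison norm on tangent vectors through the pushforward $d\varphi$, and $\varphi$ is smooth with smooth inverse given in Eq.~\eqref{eq:ilr}, so it is a global diffeomorphism $\mathring\Delta^{D}\to\mathbb{R}^{D}$ that preserves the metric, i.e.\ an isometry.

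The only non-routine step is recognizing $\boldsymbol{H}^\top\boldsymbol{H}$ as the centering projector; once that is in hand, the remainder is a careful but mechanical bookkeeping of the double sum in $\langle\cdot,\cdot\rangle_A$. I do not anticipate real obstacles beyond keeping the indices straight, and I would present the computation as two short displayed equations rather than a long derivation.
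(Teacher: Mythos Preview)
Your proposal is correct and follows essentially the same approach as the paper: both identify $\boldsymbol{H}^\top\boldsymbol{H}=\boldsymbol{I}_K-\tfrac{1}{K}\boldsymbol{1}\boldsymbol{1}^\top$ as the centering projector and then expand the Aitchison double sum to match the resulting bilinear form in $\log\boldsymbol{x},\log\boldsymbol{y}$. The only cosmetic difference is order (the paper simplifies the Aitchison side first, you the Euclidean side), and your remark about the pushforward $d\varphi$ is slightly more than needed since the inner-product identity on points already gives the Hilbert-space isometry directly.
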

\paragraph{Stick-breaking transform (SB)} 
We additionally consider one order-dependent transformation that improves over the multiplicative logratio by the composition  with a shift which centers the transformation \citep{Carpenter2017}.
For $1\leq k\leq D$, it is defined as:
\begin{equation} \label{eq:sb}
\begin{aligned}
\varphi &: \mathring\Delta^{D} \to \mathbb{R}^{D}, 
\quad \boldsymbol{x} \mapsto \boldsymbol{z}, \\
z_k &= \mathrm{mlr}(\boldsymbol{x})_k 
      - \log\!\left(\frac{1}{K-k}\right), \\
\varphi^{-1} &: \mathbb{R}^{D} \to \mathring\Delta^{D}, 
\quad \boldsymbol{z} \mapsto \boldsymbol{x}, \\
\boldsymbol{x} &= \mathrm{mlr}^{-1}(\boldsymbol{y}), 
\ y_k = z_k + \log\!\left(\frac{1}{K-k}\right).
\end{aligned}
\end{equation}
The last entry is $x_{K} = 1 - \sum_{k=1}^{D} x_k$ . The term $\tfrac{1}{{K}-k}$ centers the transformation such that the zero vector in $\mathbb{R}^{D}$ maps to the vector $[\tfrac{1}{K},..,\tfrac{1}{K}]\in \Delta^D$. For example a Gaussian with zero mean will correspond to a distribution centered at the midpoint of the simplex. 
SB has simple Jacobian determinant,
$
    \det \boldsymbol{J}_\varphi  = \prod_{i=1}^{K}\frac{1}{x_i},
$
and is widely used in probabilistic modeling \citep{Linderman2015, Carpenter2017}.

\subsection{Handling discrete data }

Categorical observations $\boldsymbol{c}\in\{ \boldsymbol{e}_1,\ldots,\boldsymbol{e}_{K}\}$ lie on the boundary $\partial\Delta^D$, requiring two additional tools: a way to map discrete observations into $\mathring\Delta^D$ for training, and a way to map generated continuous samples back to discrete observations at inference. 

We do both using a stochastic interpolation scheme
\[
 {\boldsymbol{x}} = \lambda \boldsymbol{c} + (1-\lambda) \boldsymbol{\varepsilon},
\]
with 
$\boldsymbol{\varepsilon} \sim \mathrm{Dir}(\boldsymbol{\alpha})$. It associates each one-hot vector $\boldsymbol{c}$ with a continuous representation ${\boldsymbol{x}}$ on the simplex. The scheme is inspired by two distinct previous works. On one hand, it generalizes the deterministic interpolation of \citet{Chereddy2025} that 
represents $\boldsymbol{c}$ with $\boldsymbol{x}$ that is pulled from the corner of the simplex slightly towards the centroid.
On the other hand, we leverage a theoretical result of \citet{Stark2024} to prove that we can exactly recover the discrete observations from the continuous relaxation.

\paragraph{Mapping observations to continuous space.}  
During training, we sample ${\boldsymbol{\varepsilon}}$ for each categorical observation at every iteration. This transforms the discrete data into a Dirichlet-interpolated mixture $q_\lambda(\boldsymbol{x})$ characterized formally in Proposition~\ref{prop:cat_prob_tv}, and the continuous generative model is used to approximate this density. The proposition effectively states that modeling the mixture distribution accurately recovers the true categorical distribution in the total variation sense.

\begin{proposition}\label{prop:cat_prob_tv}
\textbf{Categorical probabilities bound:}
Let $\lambda \ge \tfrac{1}{2}$ so that the Dirichlet–interpolated mixture 
$q_\lambda(\boldsymbol{x})=\sum_{k=1}^{K} p_k\, q_\lambda(\boldsymbol{x}\mid \boldsymbol{e}_k)$ (see Eq.~\ref{eq:mixlogp})
has a.s. disjoint component supports contained in the strict $\arg\max$ regions
$\mathcal{R}_k := \{ \boldsymbol{x}\in \mathring \Delta^{D} : x_k > x_j\ \forall j\neq k\}.$
For any density $\tilde q$ on $\mathring \Delta^D$ define the induced (generated) categorical probabilities
$\hat p_k := \int_{\mathcal{R}_k} \tilde q(\boldsymbol{x})\,d\boldsymbol{x}.$
Then the total variation between true and generated categorical laws is bounded by the distance between the distributions:
\begin{equation*}
\mathrm{TV}(\boldsymbol{p},\hat{ \boldsymbol{p}})= \tfrac{1}{2}\sum_{k=1}^{K} | \hat p_k - p_k |
\;\le\; \tfrac{1}{2}\,\|\tilde q - q_\lambda\|_{1}.
    \end{equation*}
In particular, $\|\tilde q - q_\lambda\|_{1}\to 0$ implies $\hat{\boldsymbol{p}} \to \boldsymbol{p}$ in total variation, and the $\arg\max$ discretization of $\tilde q$ recovers exactly the true categorical distribution.
\end{proposition}

\paragraph{Mapping continuous samples to discrete observations.}
Samples drawn from the continuous generative model can be transformed back to discrete categories by selecting the category with the largest entry.
Proposition~\ref{prop:interpolation}, generalized from \citet{Stark2024}, shows that given an interpolated point $\boldsymbol{x}$ for $\lambda > \tfrac{1}{2}$ the operator $\arg\max \boldsymbol{ x}$ recovers the original category $\boldsymbol{c}$ exactly.

\begin{proposition} \label{prop:interpolation}
\textbf{Dirichlet Interpolation:}
Let $\boldsymbol{c} = \boldsymbol{e}_k$ for some $k\in\{1,..,K\}$. Let ${\boldsymbol{x}} := \lambda \boldsymbol{c} + (1-\lambda) \boldsymbol{\varepsilon}$
where $\boldsymbol{\varepsilon} \sim \mathrm{Dir}(\boldsymbol{\alpha})$ with $\alpha_i > 0$.
If $\lambda > \tfrac{1}{2}$, then $\arg\max {\boldsymbol{x}} = \boldsymbol{c}$. For $\lambda = \tfrac{1}{2}$, this holds almost surely under the distribution of $\boldsymbol{\varepsilon}$.
\end{proposition}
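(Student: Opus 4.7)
The plan is a direct componentwise calculation. Writing $\boldsymbol{c}=\boldsymbol{e}_k$, the coordinates of the interpolant are
\[
x_k=\lambda+(1-\lambda)\varepsilon_k,\qquad x_j=(1-\lambda)\varepsilon_j\ \text{for } j\neq k,
\]
so that
\[
x_k-x_j=\lambda+(1-\lambda)(\varepsilon_k-\varepsilon_j).
\]
I would then bound $x_k-x_j$ from below using only the simplex constraints $\varepsilon_i\ge 0$ and $\sum_i\varepsilon_i=1$, which give $\varepsilon_k-\varepsilon_j\ge -1$, and hence
\[
x_k-x_j\ \ge\ \lambda-(1-\lambda)\ =\ 2\lambda-1.
\]

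For the first claim, observe that when $\lambda>\tfrac12$ the right-hand side is strictly positive \emph{uniformly} in $\boldsymbol{\varepsilon}$, so the inequality $x_k>x_j$ holds for every $j\neq k$ and every realization of $\boldsymbol{\varepsilon}$, yielding $\arg\max\boldsymbol{x}=\boldsymbol{e}_k=\boldsymbol{c}$ deterministically. This part requires no probabilistic argument at all.

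For the boundary case $\lambda=\tfrac12$, the bound only gives $x_k-x_j\ge 0$, with equality iff $\varepsilon_k=0$ and $\varepsilon_j=1$. I would then appeal to a standard property of the Dirichlet distribution with strictly positive parameters $\alpha_i>0$: its law is absolutely continuous with respect to Lebesgue measure on the open simplex $\mathring\Delta^{D}$, so for each pair $(k,j)$ the event $\{\varepsilon_k=0,\varepsilon_j=1\}$ has probability zero. A union bound over the finitely many pairs $j\neq k$ then gives $x_k>x_j$ for all $j\neq k$ almost surely, and therefore $\arg\max\boldsymbol{x}=\boldsymbol{c}$ almost surely.

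The only ``obstacle'' worth naming is the $\lambda=\tfrac12$ case, and it is mild: one must be careful to note that the inequality $\varepsilon_k-\varepsilon_j\ge -1$ is tight precisely on a set of Dirichlet measure zero, which follows immediately from absolute continuity. No further machinery is needed, and in particular the proof does not rely on the concrete form of the Dirichlet density, only on its support and continuity.
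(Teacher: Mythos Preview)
Your proposal is correct and follows essentially the same approach as the paper: both compute the coordinates of $\boldsymbol{x}$, reduce $x_k>x_j$ to an inequality equivalent to $\varepsilon_j-\varepsilon_k<\tfrac{\lambda}{1-\lambda}$ (your formulation $x_k-x_j\ge 2\lambda-1$ is just a rearrangement), use the simplex bound $\varepsilon_j-\varepsilon_k\le 1$, and handle the $\lambda=\tfrac12$ boundary case by noting that $\{\varepsilon_j=1,\varepsilon_k=0\}$ is a Dirichlet null set.
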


\paragraph{Effect of the parameters.}
The interpolation scheme has two parameters, 
 the interpolation constant $\lambda$ and
the Dirichlet concentration $\boldsymbol{\alpha}$.  We next explain how they can be chosen without resorting to hyperparameter optimization.

Figure~\ref{fig:lambda} illustrates the induced mixture for different $\lambda$; for large $\lambda$ the probability mass remains close to the vertices where the geometry is more curved geometry. We  characterize the density as a continuous transformation of a unimodal base distribution at the center of the space (e.g. $\boldsymbol{z}_0\sim \mathcal{N}(\boldsymbol{0}, \boldsymbol{I}_D)$), and moving more of the mass towards the centroid likely makes this slightly easier. Consequently, we recommend using the smallest valid choice (Proposition~\ref{prop:interpolation}).
That is, we use $\lambda = \tfrac{1}{2}$.

In absence of prior information, the symmetric Dirichlet $\mathrm{Dir}(\alpha,..,\alpha)$ with
$\alpha \in (0,\infty)$ is the only reasonable choice, with the marginal variance
$
   \Var(x_k)  = \frac{D}{K^2(\alpha K + 1)}.
$
For $\alpha<1$ the mass concentrates near the boundary,
whereas for $\alpha>1$ it concentrates at the interior. Following the above reasoning, we prefer having  the mass away from the border, implying $\alpha \gg 1$.

One way to reason about the optimal value is to investigate the variance more closely. For constant $\alpha$, it decreases with $K$, suggesting a possible dependency with the dimensionality. However, under the ILR transform, the variance in the Euclidean coordinates is constant and independent of $K$, as formulated in Proposition~\ref{prop:euc_cov}. That is, even though the simplex marginals become tighter with $K$, the variance of the transformed variables $\boldsymbol{z} = \varphi(\boldsymbol{x})$ remains constant. The choice hence does not dramatically influence the difficulty of fitting the continuous model, and we settle with a uniform choice of sufficiently large $\alpha=100$. 
For SB we do not have a similar theoretical result, but nevertheless make the same choice.

\begin{proposition} \label{prop:euc_cov}
\textbf{Euclidean covariance:}
Let $\boldsymbol{x}\sim\mathrm{Dir}(\alpha,..,\alpha)$ with $\alpha>0$. Let $\boldsymbol{H}\in\mathbb{R}^{{D}\times K}$ be a Helmert matrix, and $\varphi$ the ILR transform. Then the covariance of $\boldsymbol{z} = \varphi(\boldsymbol{x})$ is
\begin{equation*}
\operatorname{Cov}(\boldsymbol{z})=\psi'(\alpha)\,\boldsymbol{I}_{D},  
\end{equation*}
where $\psi'$ is the trigamma function. In particular, the covariance of $\boldsymbol{z}$  is independent of $K$.
\end{proposition}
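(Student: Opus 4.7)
The plan is to compute the covariance of $\log\boldsymbol{x}$ directly using standard log-Dirichlet moments and then push it through the linear ILR map, exploiting the two defining properties of the Helmert matrix $\boldsymbol{H}$: its rows are orthonormal and sum to zero. Concretely, I will proceed in three short steps.

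First, I would recall the log-moments of a symmetric Dirichlet. For $\boldsymbol{x}\sim\mathrm{Dir}(\alpha,\ldots,\alpha)$, differentiating the log-normalizer gives
\begin{equation*}
\mathrm{Var}(\log x_i)=\psi'(\alpha)-\psi'(K\alpha),\qquad \mathrm{Cov}(\log x_i,\log x_j)=-\psi'(K\alpha)\ \text{for }i\neq j,
\end{equation*}
which can be written compactly as
\begin{equation*}
\boldsymbol{\Sigma}:=\mathrm{Cov}(\log\boldsymbol{x})=\psi'(\alpha)\boldsymbol{I}_{K}-\psi'(K\alpha)\boldsymbol{1}\boldsymbol{1}^{\top}.
\end{equation*}

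Second, since $\boldsymbol{z}=\varphi(\boldsymbol{x})=\boldsymbol{H}\log\boldsymbol{x}$ is an affine function of $\log\boldsymbol{x}$, the covariance transforms as
\begin{equation*}
\mathrm{Cov}(\boldsymbol{z})=\boldsymbol{H}\boldsymbol{\Sigma}\boldsymbol{H}^{\top}
=\psi'(\alpha)\,\boldsymbol{H}\boldsymbol{H}^{\top}-\psi'(K\alpha)\,(\boldsymbol{H}\boldsymbol{1})(\boldsymbol{H}\boldsymbol{1})^{\top}.
\end{equation*}

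Third, I would invoke the two characterizing properties of the Helmert matrix (referenced in the Supplement as stated in the paper). Its rows form an orthonormal basis of the zero-sum subspace $\{v\in\mathbb{R}^{K}:\sum_i v_i=0\}$, so $\boldsymbol{H}\boldsymbol{H}^{\top}=\boldsymbol{I}_{D}$ and, crucially, $\boldsymbol{H}\boldsymbol{1}=\boldsymbol{0}$. Substituting these collapses the second term and leaves $\mathrm{Cov}(\boldsymbol{z})=\psi'(\alpha)\boldsymbol{I}_{D}$, which is manifestly independent of $K$.

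The calculation is essentially mechanical, so there is no real obstacle; the only subtlety worth flagging explicitly is the vanishing of the $\psi'(K\alpha)$ term, which is what cancels the $K$-dependence. This hinges entirely on the fact that the Aitchison geometry, and hence the Helmert basis, is orthogonal to the constant direction $\boldsymbol{1}$ that corresponds to overall scaling of $\log\boldsymbol{x}$. It is exactly this invariance to a shared log-scale that suppresses the cross-component covariance contribution and yields the isotropic, dimension-free result.
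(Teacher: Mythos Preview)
Your proof is correct and follows essentially the same route as the paper: compute $\mathrm{Cov}(\log\boldsymbol{x})=\psi'(\alpha)\boldsymbol{I}_K-\psi'(K\alpha)\boldsymbol{1}\boldsymbol{1}^\top$, sandwich it by $\boldsymbol{H}$, and use $\boldsymbol{H}\boldsymbol{H}^\top=\boldsymbol{I}_D$ together with $\boldsymbol{H}\boldsymbol{1}=\boldsymbol{0}$ to kill the $K$-dependent term. Your closing remark on why the $\psi'(K\alpha)$ contribution vanishes is a nice addition that the paper does not spell out.
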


\begin{figure}
    \centering    
    \begin{subfigure}{0.32\linewidth}
        \centering
        \includegraphics[width=\linewidth]{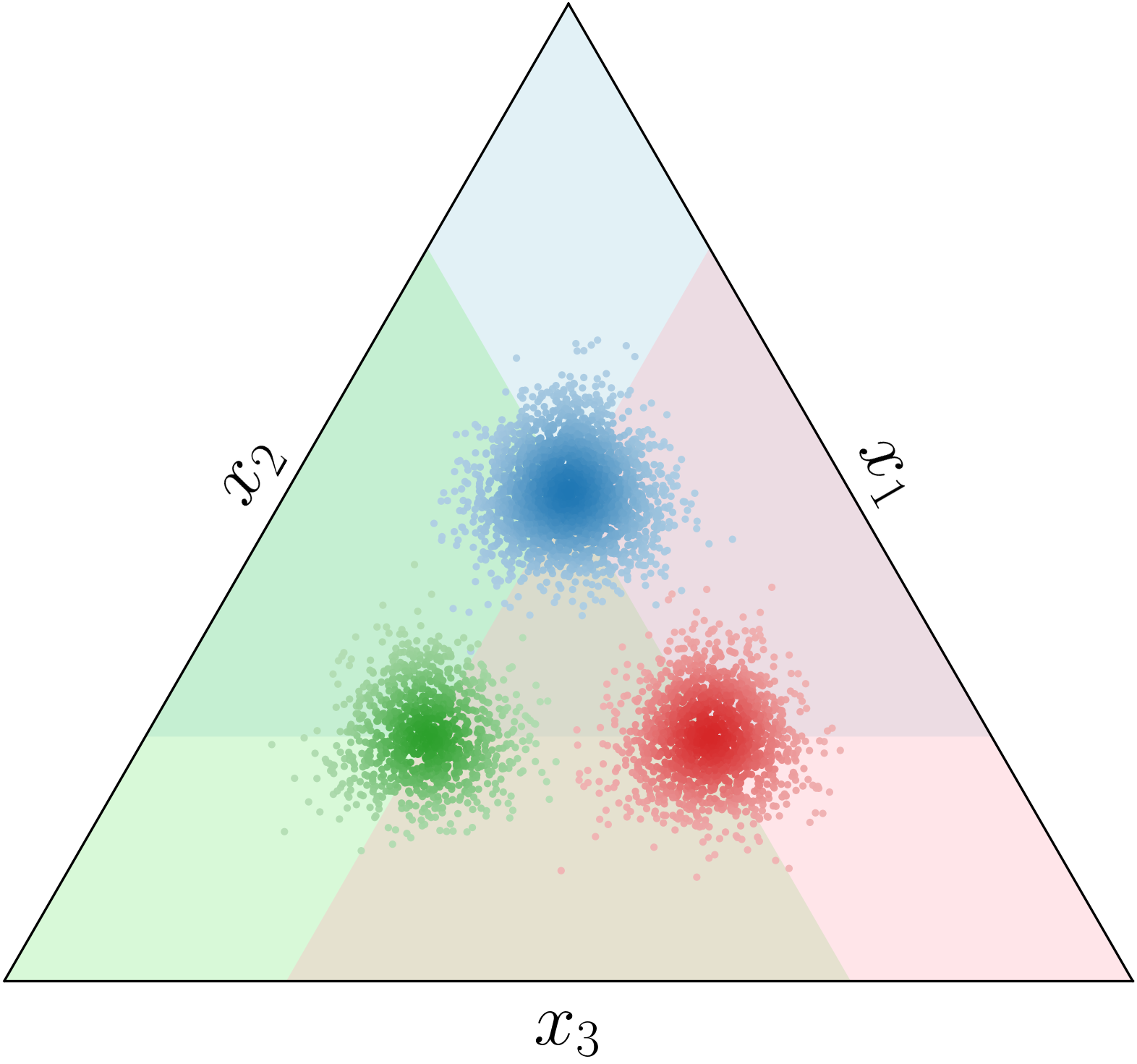}
        \caption*{$\lambda=\tfrac{1}{4}$}
    \end{subfigure}
    \begin{subfigure}{0.32\linewidth}
        \centering
        \includegraphics[width=\linewidth]{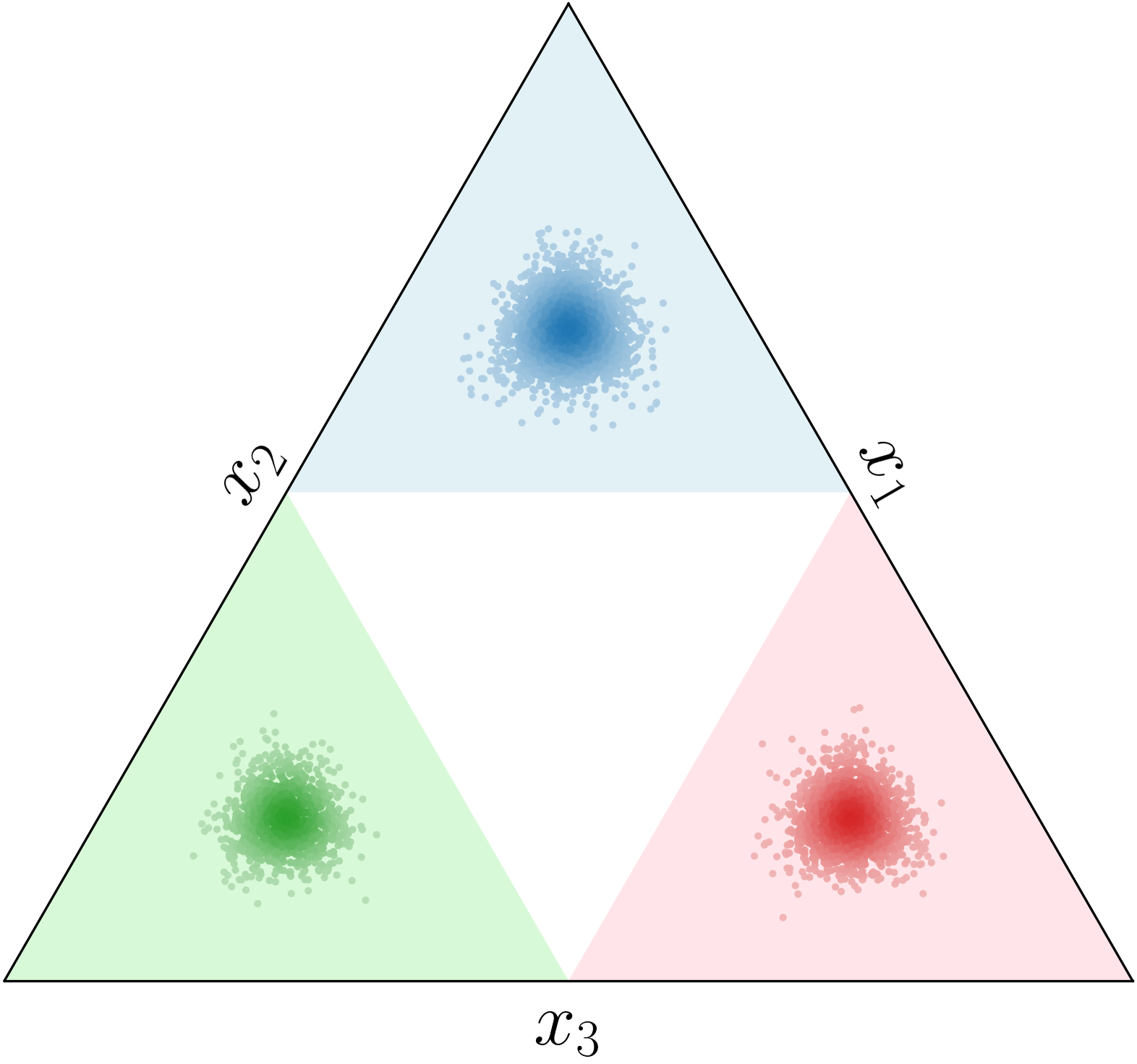}
        \caption*{$\lambda=\tfrac{1}{2}$}
    \end{subfigure}
    \begin{subfigure}{0.33\linewidth}
        \centering
        \includegraphics[width=\linewidth]{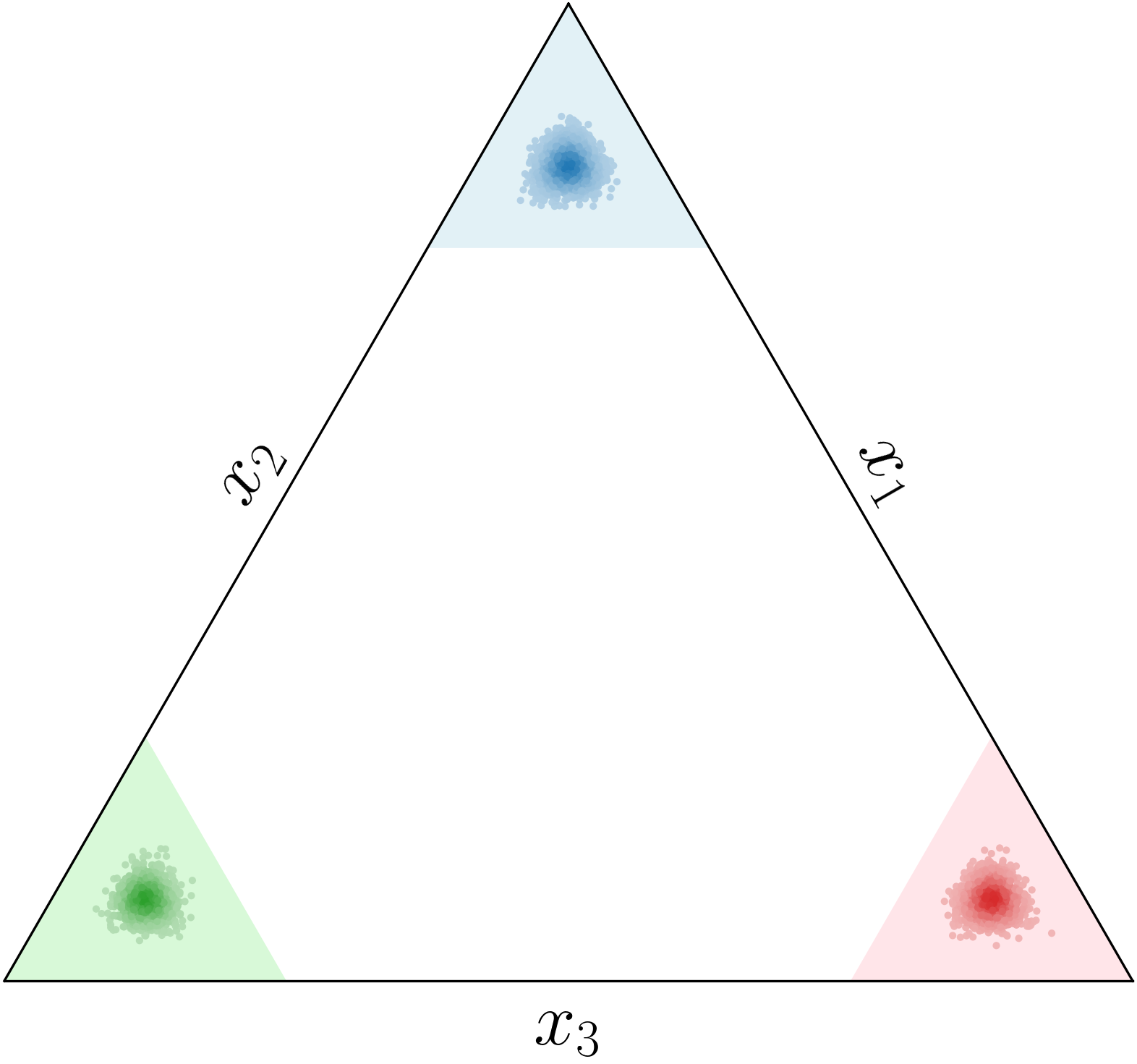}
        \caption*{$\lambda=\tfrac{3}{4}$}
    \end{subfigure}
    \caption{Dirichlet interpolation. The $\lambda$ parameter controls the mixture distribution. For $\lambda \ge \tfrac{1}{2}$ the supports do not overlap and we can recover the categories. Large $\lambda$ unnecessarily concentrates the mass around the simplex borders we want to avoid.  
}
    \label{fig:lambda}
\end{figure}

\begin{figure*}
    \centering
    \begin{subfigure}[b]{0.24\linewidth}
        \centering
        \includegraphics[width=\linewidth]{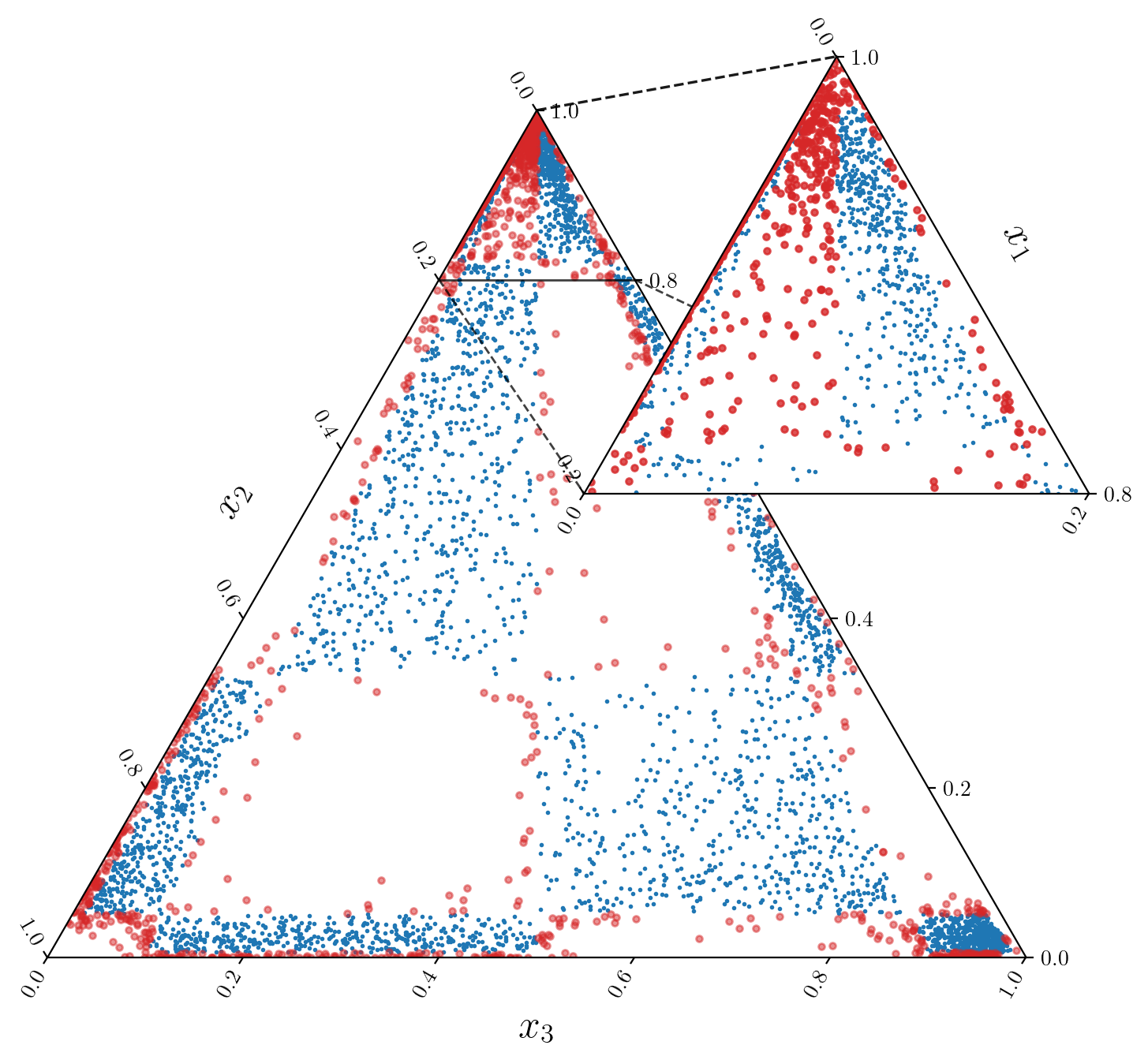}
        \caption{LinearFM, $25.9\%$}
    \end{subfigure}
    \begin{subfigure}[b]{0.24\linewidth}
        \centering
        \includegraphics[width=\linewidth]{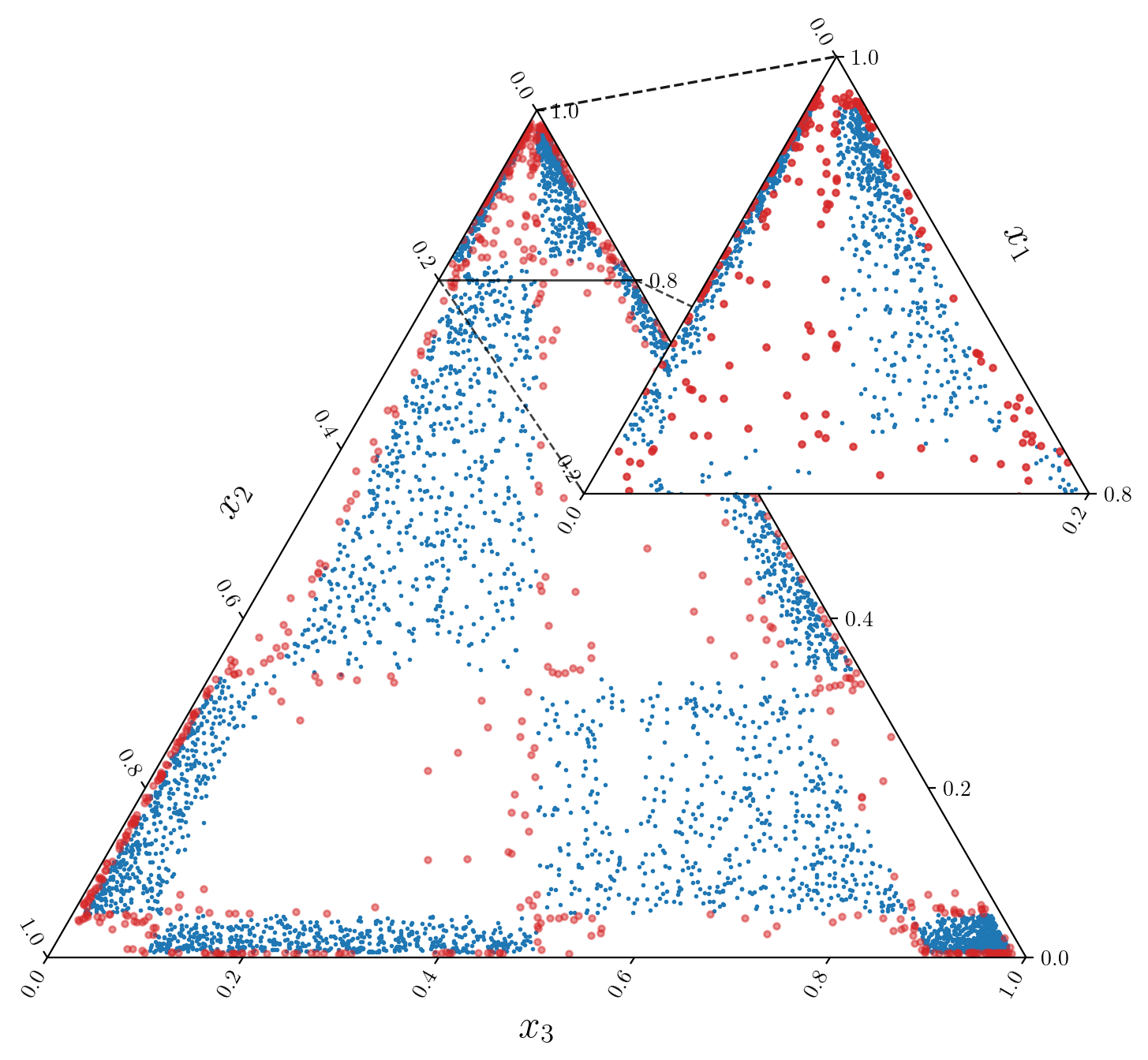}
        \caption{SFM, $12.9\%$}
    \end{subfigure}
    \begin{subfigure}[b]{0.24\linewidth}
        \centering
        \includegraphics[width=\linewidth]{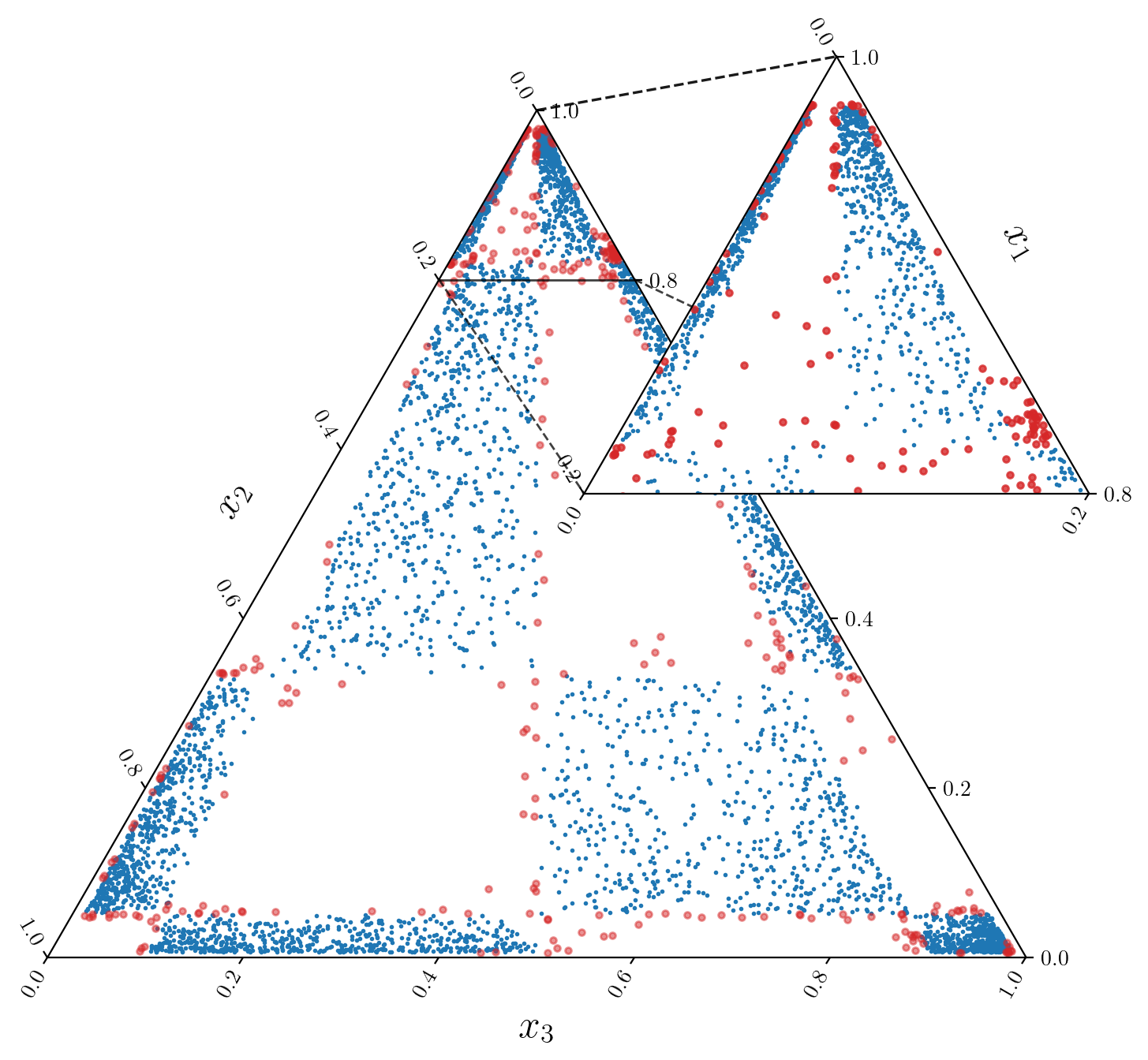}
        \caption{\ourmethod\ (ILR), $6.8 \%$}
    \end{subfigure}
    \begin{subfigure}[b]{0.24\linewidth}
        \centering
        \includegraphics[width=\linewidth]{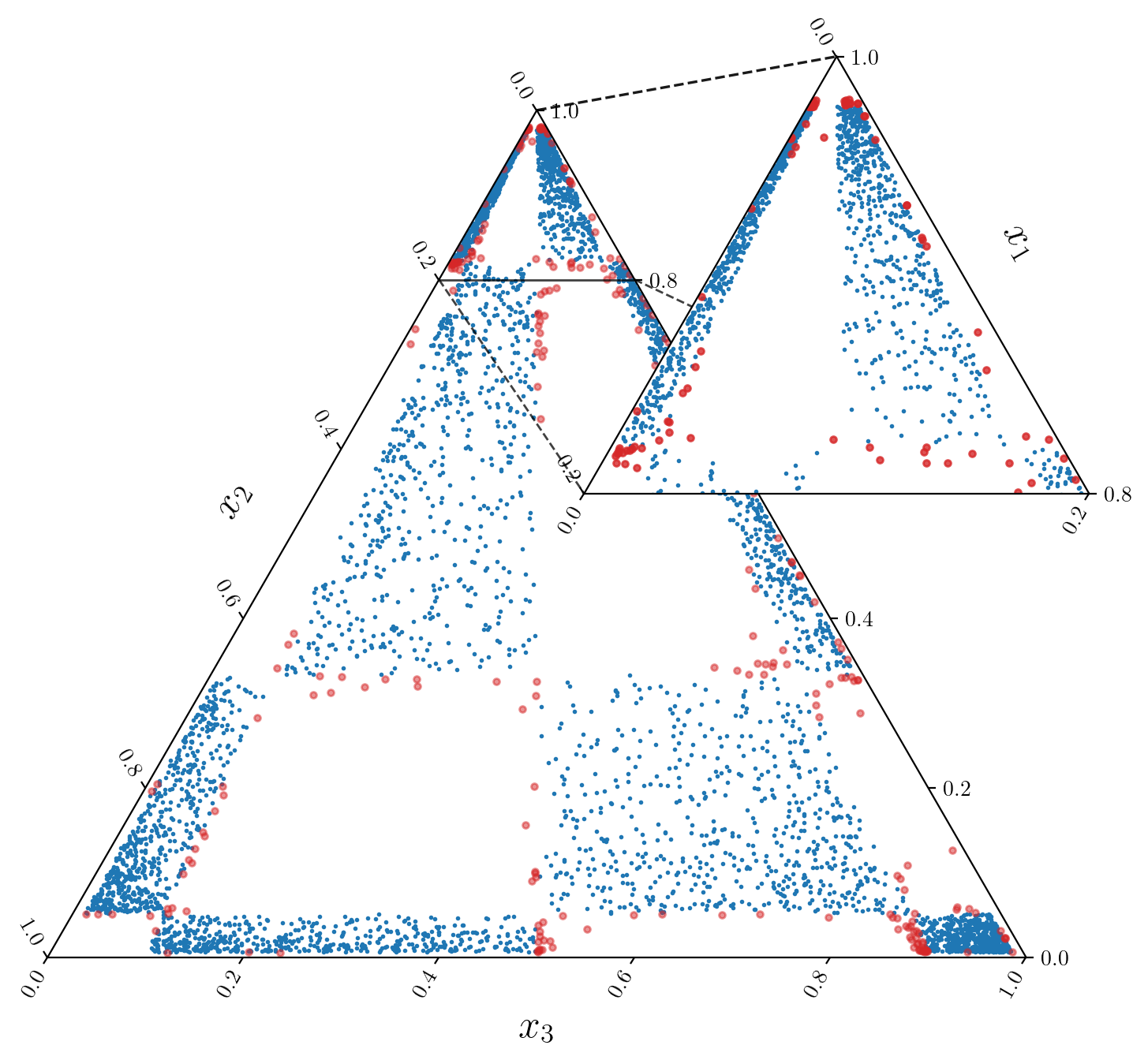}
        \caption{\ourmethod\ (SB), $5.4 \%$}
    \end{subfigure}
    \caption{Samples from Checkerboard on the simplex. Red points indicate samples not aligned with the true density ($\%$ indicated in caption). The zoomed area shows the top region $x_1\geq \tfrac{4}{5}$, emphasizing the differences. }
    \label{fig:checkboard}
\end{figure*}

\subsection{Training, sampling and evaluation} 
The full method is characterized in Algorithms~\ref{alg:train} and \ref{alg:sample}. 
The method can be used in two ways: When the data are discrete categorical observations, we apply Dirichlet interpolation during training and recover categories at sampling via the $\arg\max$ operator. For compositional data (e.g. Fig.~\ref{fig:checkboard}) the interpolation step is not needed.
In both cases, sampling from the underlying continuous model requires solving an ODE as usual; we use standard numerical solvers such as Euler or Dormand–Prince.
\begin{algorithm}
\caption{Training of \ourname\ (\ourmethod)}
\label{alg:train}
\begin{algorithmic}[1]
\Require Data $\boldsymbol{c}\!\in\!\Delta^{D}$, weight $\lambda\!\in\!(0,1)$, 
Dirichlet $\alpha>0$, bijection $\varphi$, base $p_0$ (e.g. $\mathcal{N}(\boldsymbol{0},\boldsymbol{I})$), 
coupling $\pi$ (indep. or minibatch OT), isDiscrete
\For{each mini-batch}
  \For{each $\boldsymbol{c}$ in batch}
    \If{isDiscrete}
      \State Sample $\boldsymbol{\varepsilon}\!\sim\!\mathrm{Dir}(\alpha,..,\alpha)$
      \State ${\boldsymbol{x}} \gets \lambda \boldsymbol{c}+(1-\lambda)\boldsymbol{\varepsilon}$
    \Else
      \State ${\boldsymbol{x}} \gets \boldsymbol{c}$
    \EndIf
  \EndFor
  \State $\boldsymbol{z}_1 \gets \varphi({\boldsymbol{x}})$ \Comment{To Euclidean space}
  \State Sample $\boldsymbol{z}_0 \sim p_0$; pair $(\boldsymbol{z}_0,\boldsymbol{z}_1)$ via $\pi$
  \State Sample $t\sim \mathrm{Unif}(0,1)$
  \State $\boldsymbol{z}_t\!\gets\!(1-t)\boldsymbol{z}_0+t \boldsymbol{z}_1$, \;
         $\boldsymbol{u}_t\!\gets\!\boldsymbol{z}_1-\boldsymbol{z}_0$
  \State Update $\theta$ by 
  $\min \|\boldsymbol{v}_\theta(\boldsymbol{z}_t,t)-\boldsymbol{u}_t\|^2$ \; (Eq.~\eqref{eq:cfm_loss})
\EndFor
\end{algorithmic}
\end{algorithm}
\begin{algorithm}[t]
\caption{Sampling with \ourmethod}
\label{alg:sample}
\begin{algorithmic}[1]
\Require Learned $\boldsymbol{v}_\theta$, bijection $\varphi$, base $p_0$, isDiscrete
\State Sample $\boldsymbol{z}_0 \sim p_0$ 
\State Solve ODE: 
$\tfrac{d\boldsymbol{z}_t}{dt} = \boldsymbol{v}_\theta(\boldsymbol{z}_t,t), \; t\!\in[0,1]$
\State $\hat{\boldsymbol{x}} \gets \varphi^{-1}(\boldsymbol{z}_1)$
\If{isDiscrete}
  \State $\hat{\boldsymbol{c}} = \arg\max \hat{\boldsymbol{x}}$ 
  \Comment{Discrete sample}
\EndIf
\end{algorithmic}
\end{algorithm}

We cannot directly evaluate the categorical probability $\Pr(C=k)$, since the model only defines a continuous density on the interior. We do not need it for learning 
(training uses the interpolated points) or sampling (obtained by the $\arg\max$-operation) 
and it is not provided by many alternatives either. For example, \citet{Cheng2024} only provides an overly loose lower bound (see Supplement~\ref{app:cat_probs} for a demonstration). 
Nonetheless, for evaluation it is useful to estimate $\Pr(C=k)$

Proposition~\ref{prop:interpolation} implies that for any  $\alpha>1$ the true density of the interpolated data $q_{\mathrm{true}}(\boldsymbol{x})$ is a mixture of $K$ Dirichlets, with the modes at the expected values $\boldsymbol{\mu}^{(k)}:= \lambda \boldsymbol{e}_k + (1-\lambda)\tfrac{1}{K}$. Evaluating at $\boldsymbol{\mu}^{(k)}$, we have
$\Pr(C{=}k) = \tfrac{q_{\mathrm{true}}(\boldsymbol{\mu}^{(k)})}{q_{\lambda}(\boldsymbol{\mu}^{(k)}\mid \boldsymbol{e}_k)}$, which naturally leads to the estimator
\begin{equation}\label{eq:catlogp}
    \widehat{\Pr}(C=k) = \frac{q_\theta(\boldsymbol{\mu}^{(k)})}{q_\lambda(\boldsymbol{\mu}^{(k)}|\boldsymbol{e}_k)}.
\end{equation}
The model's density is computed by combining the change of variables given by $\varphi$ and the instantaneous change of variables given by the flow model,
\begin{align*}
    \log q_\theta(\boldsymbol{x}) &= \log p_{0}(\boldsymbol{z}_0) -  \int_0^1 \mathrm{div}(\boldsymbol{v}^\theta_s) (\boldsymbol{z}_s)\dd s + \log \left|\pdv{\boldsymbol{z}_1}{\boldsymbol{x}}\right|.
\end{align*}
Each mixture component is supported on the region of the simplex where its category is the largest, 
$\{\boldsymbol{x}\in \mathring \Delta^D : x_k > x_j\ \forall j\neq k\}$, 
and is obtained by shifting and rescaling a Dirichlet distribution:
\begin{equation}
q_\lambda(\boldsymbol{x}\mid \boldsymbol{e}_k) =\frac{1}{(1-\lambda)^{D}}\mathrm{Dir}\left(\frac{\boldsymbol{x}-\lambda \boldsymbol{e}_k}{1-\lambda}; \alpha\right). \label{eq:mixlogp}
\end{equation}

\section{EXPERIMENTS}
We evaluate our approach on five tasks.
We compare against other continuous models  DirichletFM \citep{Stark2024}, DDSM \citep{Avdeyev2023}, Bit-Diffusion 
\citep{Chen2023}, Gumbel-Softmax FM \citep{Tang2025} and $\alpha$-Flow \citep{Cheng2025}.
As additional baselines, we report results with some methods working directly in the discrete space, 
DFM \citep{Gat2024}, D3PM \citep{Austin2021}, MDLM \citep{Sahoo2024}, SEDD \citep{Lou2024} and MultiFlow \citep{Campbell2024}, to measure the gap between the two families.

We also report results for a baseline using Euclidean geometry directly on the simplex \citep{Chen2024,Stark2024}, denoting it by LinearFM. 
As explained in Section~\ref{sec:cfm}, our method is used with  minibatch OT (w/OT) and without OT, we report both results. We fix $\lambda=\tfrac{1}{2}$ and $\alpha=100$ in all cases. Our code implementation is available at 
\href{https://github.com/williwilliams3/simplexfm}{\texttt{github.com/williwilliams3/simplexfm}}.
\paragraph{Compositional data}
The training data consists of continuous samples in $\mathring \Delta^2$, drawn from a checkerboard distribution in $\mathbb{R}^2$ and projected on the simplex with the inverse stick-breaking transform.  We train directly on these continuous samples rather than on discrete observations. As shown in Fig.~\ref{fig:checkboard}, the generated data from \ourmethod\ aligns more closely with the true distribution, whereas LinearFM and SFM produce many poor samples near the vertices.
The number of invalid samples (points falling in regions of zero density) is more than twice for both, compared to our method.

\begin{table}[t]
\centering
\caption{For NLL, $\le$ is an upper bound and $\approx$ means estimate using Eq.~\ref{eq:catlogp}.
The results above the dotted line are from \citet{Cheng2024}.}
\label{tab:bmnist}
\begin{tabular}{clcc}
\cline{2-4}
 & \textbf{Model} & \textbf{NLL$\downarrow$} & \textbf{FID$\downarrow$} \\
\cline{2-4}
\multirow{2}{*}{\rotatebox[origin=c]{90}{\textbf{Disc.}}} 
& D3PM   & $\leq 0.141 \pm 0.021$ & 67.36 \\
 & DFM    & $0.101 \pm 0.017$ & 34.42 \\  
\cline{2-4}
\multirow{7}{*}{\rotatebox[origin=c]{90}{{\textbf{Continuous}}}}
 & DirichletFM & NA & 77.35 \\
 & DDSM   & $\leq 0.100 \pm 0.001$ & 7.79 \\
 & LinearFM    & NA & 5.91 \\
 & SFM w/ OT   & NA & 4.62 \\
 \cdashline{2-4}
 & \ourmethod(SB)          & $\approx 0.0341 \pm 0.0006$ & 4.93 \\
 & \ourmethod(SB)w/OT    & $\approx 0.0732 \pm 0.0017$ & 4.51 \\
 & \ourmethod(ILR)         & $\approx 0.0851 \pm 0.0051$ & \textbf{4.36} \\
 & \ourmethod(ILR)w/OT   & $\approx 0.0620 \pm 0.0012$ & 4.57 \\
\cline{2-4}
\end{tabular}
\end{table}

\paragraph{Binarized MNIST}
The Binarized MNIST sets each pixel of MNIST to $1$ with probability given by its intensity and $0$ otherwise \citep{Salakhutdinov2008}; each pixel takes value in $\Delta^1$. 
The velocity field is modeled using a convolutional neural network \citep{Song2020, Cheng2024}. We use the standard train/validation/test split and report both negative log-likelihood (NLL)
and the Fréchet inception distance (FID) for the test samples in Table~\ref{tab:bmnist}. 
\ourmethod\ has the lowest NLL and FID, with relatively similar performance for all four variants. Some of the baselines (D3PM, DFM and DirichletFM) are substantially worse.

\paragraph{DNA sequence generation}
We use the human Promoter DNA sequence data from \citet{Avdeyev2023}, with 100,000 sequences of 1024 elements with a transcription signal.
The task is conditional (on the signal) generation of the four symbols in $\Delta^3$.
The training, validation, and test sets are split based on chromosomes: Chromosome 10 is used for validation, Chromosomes 8 and 9 for testing, and the remaining chromosomes for training.
The velocity field is modeled with the same architecture as in \citet{Avdeyev2023}, and following their setup, we evaluate generations by mapping both generated and test samples through a pretrained Sei model \citep{Chen2022}; the SP-MSE loss is the average Euclidean distance between these embeddings. Table~\ref{tab:promoter} shows our method is again the best.
\begin{table}
\centering
\caption{DNA sequence generation. The results for the baselines are from the respective papers.
}
\label{tab:promoter}
    \begin{tabular}{l c}
    \toprule
    \textbf{Model} & \textbf{SP-MSE$\downarrow$} \\
    \midrule
    DDSM & 0.0334 \\
    D3PM-uniform & 0.0375 \\
    Bit-Diffusion (one-hot) & 0.0395 \\
    Bit-Diffusion (bit) & 0.0414 \\    
    Gumbel-Softmax FM &  0.0290\\
    DirichletFM & 0.0269 \\
    LinearFM & 0.0282 \\    
    SFM & 0.0258 \\
    \hdashline
    \ourmethod(SB) &  0.0278\\
    \ourmethod(SB)w/OT &  \textbf{0.0214}\\
    \ourmethod(ILR) &  0.0259\\
    \ourmethod(ILR)w/OT &  0.0224\\
    \bottomrule
\end{tabular}
\end{table}

\paragraph{Text8}
We use the Text8 dataset \citep{Mahoney2011}, which models individual letters as elements of $\Delta^{26}$. It contains 27 symbols (26 letters plus a blank space) and follows the standard 95K/5K/5K split, with each sample being a random chunk of length 256. 
The velocity field is modeled through a 12-layer diffusion transformer \citep{Lou2024,Cheng2024}, and the hyperparameters are selected based on the lowest validation error. As done by \citet{Campbell2024, Cheng2025}, the evaluation is done generating 4000 samples which are tokenized according to the vocabulary of the GPT-J-6B  model \citep{Wang2021}, from the tokens the entropy is obtained and the GPT-J-6B model computes the reported NLL. We desire an entropy close to that of the data distribution and the NLL to be low. Table~\ref{tab:text8} shows that the discrete-space models are the best in terms of NLL, but our method is the best within the continuous relaxations. In terms of entropy, all methods behave fairly similarly.

\begin{table}[t]
\centering
\caption{Text8: NLL and Entropy on the GPT-J-6B model. The baseline results are from the respective papers.}
\label{tab:text8}
\begin{tabular}{cllccc}
\cline{2-5}
 & \textbf{Model} & \textbf{NLL $\downarrow$} & \textbf{Entropy} & \textbf{Diff} \\
\cline{2-5}
\multirow{6}{*}{\rotatebox[origin=c]{90}{{\textbf{Discrete}}}}
 & MDLM            & 6.76 & 7.55 & +0.07 \\
 & DFM             & 6.78 & 7.58 & +0.10 \\
 & D3PM            & 6.93 & 7.38 & -0.10 \\
 & SEDD   & \textbf{6.49} & 7.17 & -0.31 \\
 & MultiFlow       & 6.73 & 7.39 & -0.09 \\
\cline{2-5}
\multirow{8}{*}{\rotatebox[origin=c]{90}{{\textbf{Continuous}}}}
 & LinearFM          & 7.35 & 7.62 & +0.14 \\ 
 & $\alpha$-Flow($\alpha{=}{-}0.5)$  & 7.14 & 7.37 & -0.11 \\
 & $\alpha$-Flow($\alpha{=}0.5$)     & 7.00 & 7.42 & -0.06 \\
 & $\alpha$-Flow($\alpha{=}1$)       & 7.09 & 7.51 & \textbf{+0.03} \\
 & SFM             & 6.85 & 7.38 & -0.10 \\ 
 \cdashline{2-5}
 & \ourmethod(ILR)         & 6.81 & 7.39 & -0.09 \\
 & \ourmethod(ILR)w{/}OT   & 6.89 & 7.38 & -0.10 \\
\cline{2-5}
 & Data & 4.10 & 7.48 & 0 \\
\cline{2-5}
\end{tabular}
\end{table}

\paragraph{Scalability}
The previous experiments all considered problems of fixed dimensionality. To study the performance as a function of $K$, we consider a setup where $10^6$ samples are drawn from discrete distributions of $K=2^1,\dots,2^9$ categories, with some probability vector $\boldsymbol{p} \in \Delta^{D}$.
After training the model, we draw $10^5$ samples and compute the
KL divergence between the empirical density of the samples and the true distribution.
Fig.~\ref{fig:corners} shows our method generally outperforms SFM and LinearFM especially for medium dimensionalities, and is comparable to the discrete-state SEDD for dimensionality up to $K=2^7$. The Supplement~\ref{app:effect_params} has additional experiments under the same setup on the effect of $\alpha$ and $\lambda$, which were kept fixed in all of the main experiments.

\begin{figure}
    \centering
    \includegraphics[width=0.99\linewidth]{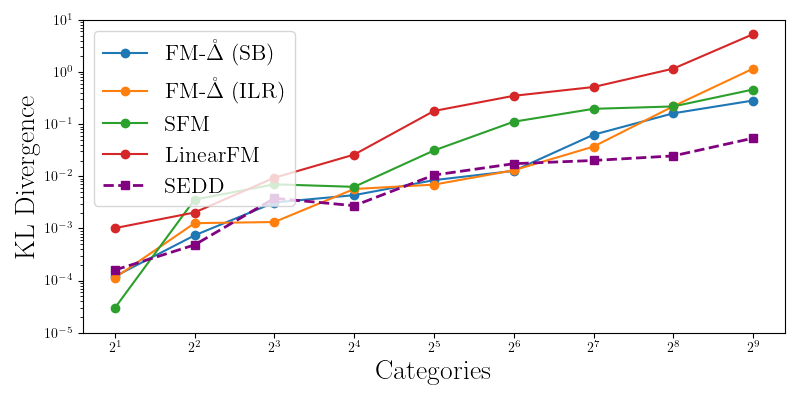}
    \caption{Divergence between the ground truth and estimated categorical probabilities, for problems of varying number of categories.}    
    \label{fig:corners}
\end{figure}

\section{DISCUSSION}

The progress of continuous relaxations for generative modeling of discrete data is extremely rapid. We briefly detail the connections to the most recent works. 

\citet{Diederen2025} used the ILR transformation in modeling compositional data, as one transformation within a composition of multiple ones. They did not, however, consider discrete observations at all. \citet{Potapczynski2020}, in turn, considered an alternative bijection building on $\mathrm{softmax}_{++}$ that also allows recovering discrete data with $\arg\max$, but their solution is designed specifically for a variational autoencoder as the generative model, with no obvious generalization for other models.

\citet{Chereddy2025} proposed a diffusion model that leverages special cases of our machinery. Inspired by logistic-normal models \citep{Atchison1980}, their Gaussian-Softmax diffusion model uses the push-forward of a Gaussian with the additive logratio transform (Eq.~\ref{eq:alr}), using a deterministic interpolation scheme similar to ours ($\lambda \boldsymbol{e}_k + (1-\lambda)\tfrac{1}{K}$ with $\lambda=0.99$) to handle discrete observations. However, their transformation leaves the mass very close to the borders and the solution is specifically geared towards use of the order-dependent ALR transformation.
\citet{Cheng2025} proposed an alternative bijection on the logit-simplex space, as a limiting case of the information geometry, resulting in a geometry that is intrinsically different from ours. Closer inspection of the differences would be highly interesting future work.
\citet{Mahabadi2024} and \citet{Tae2025} also used the logit-simplex space, but treated each discrete token as a point in the probability simplex close to a vertex, rather than making a formal connection with the vertices and well-defined points in the interior.

We specifically focused on accounting for the simplex geometry, but some methods also operate directly in a Euclidean space, by adding random noise 
directly for the one-hot observations in $\mathbb{R}^{K}$. For example, \citet{Hoogeboom2020,Hoogeboom2021} proposed a variational objective for learning a generative model and also used the $\arg\max$ operation like us.
These models miss all connection to the simplex and its natural geometry.
Truncated Flows \citep{Tan2022} extend this idea by learning normalizing flows defined over bounded regions of $\mathbb{R}^{K}$ (or a latent space), recovering the exact categories if the regions are disjoint, and Categorical Normalizing Flows \citep{Lippe2020} model categories using mixtures of logistic distributions, with a particular focus on structured data such as graphs.

Our method aligns with these concurrent works, pushing forward the boundary of how to best use established tools for continuous generation for discrete data, especially in terms of generality.
Our results indicate the method is competitive with the rest while achieving a certain degree of conceptual and computational elegance due to accounting for the simplex geometry without needing complex tools. 

\section{CONCLUSION}
We proposed a principled method that constructs a bridge between the generation of continuous data in Euclidean space and the generation of discrete data in the simplex, enabling use of broad range of Euclidean generative models for categorical data. We demonstrated the approach using flow matching with highly promising results, constraining to this choice to keep the presentational and empirical complexity at bay, but other generative models like \citep{Karras2022, Song2023, Geng2025} could directly be plugged in as future work.

\subsubsection*{Acknowledgements}
The authors were supported by the Research Council of Finland Flagship programme: Finnish Center for Artificial Intelligence FCAI, and additionally by grants: 363317 (BW, AK), 345811 (BW, AK), 348952 (MH), 369502 (MH), 359207 and  the Finland Fellowship granted by Aalto University (VYS).
The authors acknowledge the research environment provided by ELLIS Institute Finland, and then CSC - IT Center for Science, Finland, and the Aalto Science-IT project for computational resources.

\bibliographystyle{plainnat}
\bibliography{bibliography}

\section*{Checklist}



\begin{enumerate}

  \item For all models and algorithms presented, check if you include:
  \begin{enumerate}
    \item A clear description of the mathematical setting, assumptions, algorithm, and/or model. [Yes]
    \item An analysis of the properties and complexity (time, space, sample size) of any algorithm. [Yes]
    \item (Optional) Anonymized source code, with specification of all dependencies, including external libraries. [Yes]
  \end{enumerate}

  \item For any theoretical claim, check if you include:
  \begin{enumerate}
    \item Statements of the full set of assumptions of all theoretical results. [Yes]
    \item Complete proofs of all theoretical results. [Yes]
    \item Clear explanations of any assumptions. [Yes]     
  \end{enumerate}

  \item For all figures and tables that present empirical results, check if you include:
  \begin{enumerate}
    \item The code, data, and instructions needed to reproduce the main experimental results (either in the supplemental material or as a URL). [Yes]
    \item All the training details (e.g., data splits, hyperparameters, how they were chosen). [Yes]
    \item A clear definition of the specific measure or statistics and error bars (e.g., with respect to the random seed after running experiments multiple times). [Yes]
    \item A description of the computing infrastructure used. (e.g., type of GPUs, internal cluster, or cloud provider). [Yes]
  \end{enumerate}

  \item If you are using existing assets (e.g., code, data, models) or curating/releasing new assets, check if you include:
  \begin{enumerate}
    \item Citations of the creator If your work uses existing assets. [Yes]
    \item The license information of the assets, if applicable. [Yes]
    \item New assets either in the supplemental material or as a URL, if applicable. [Not Applicable]
    \item Information about consent from data providers/curators. [Not Applicable]
    \item Discussion of sensible content if applicable, e.g., personally identifiable information or offensive content. [Not Applicable]
  \end{enumerate}

  \item If you used crowdsourcing or conducted research with human subjects, check if you include:
  \begin{enumerate}
    \item The full text of instructions given to participants and screenshots. [Not Applicable]
    \item Descriptions of potential participant risks, with links to Institutional Review Board (IRB) approvals if applicable. [Not Applicable]
    \item The estimated hourly wage paid to participants and the total amount spent on participant compensation. [Not Applicable]
  \end{enumerate}

\end{enumerate}

\clearpage
\appendix
\thispagestyle{empty}

\onecolumn

\setcounter{theorem}{0}
\setcounter{proposition}{0}

\aistatstitle{Simplex-to-Euclidean Bijections for Categorical Flow Matching: \\
Supplementary Materials}

\section{MATHEMATICAL DERIVATIONS}\label{app:math}

\subsection{Proof of proposition \ref{prop:cat_prob_tv_supp}}
\begin{proposition}\label{prop:cat_prob_tv_supp}
\textbf{Categorical probabilities bound:}
Let $\lambda \ge \tfrac{1}{2}$ so that the Dirichlet–interpolated mixture 
$q_\lambda(\boldsymbol{x})=\sum_{k=1}^{K} p_k\, q_\lambda(\boldsymbol{x}\mid \boldsymbol{e}_k)$ 
has a.s. disjoint component supports contained in the strict $\arg\max$ regions
$\mathcal{R}_k := \{ \boldsymbol{x}\in \mathring \Delta^{D} : x_k > x_j\ \forall j\neq k\}.$
For any density $\tilde q$ on $\mathring \Delta^D$ define the induced (generated) categorical probabilities
$\hat p_k := \int_{\mathcal{R}_k} \tilde q(\boldsymbol{x})\,d\boldsymbol{x}.$
Then the total variation between true and generated categorical laws is bounded by the distance between the distributions:
\begin{equation*}
\mathrm{TV}(p,\hat p)= \tfrac{1}{2}\sum_{k=1}^{K} | \hat p_k - p_k |
\;\le\; \tfrac{1}{2}\,\|\tilde q - q_\lambda\|_{1}.
    \end{equation*}
In particular, $\|\tilde q - q_\lambda\|_{1}\to 0$ implies $\hat p \to p$ in total variation, and the $\arg\max$ discretization of $\tilde q$ recovers exactly the true categorical distribution.
\end{proposition}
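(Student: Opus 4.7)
The plan is to first use Proposition~\ref{prop:interpolation_supp} to identify the true categorical probabilities $p_k$ as integrals of $q_\lambda$ over the strict $\arg\max$ regions $\mathcal R_k$, so that both $p_k$ and $\hat p_k$ have the same functional form $(\int_{\mathcal R_k} (\cdot)\,d\boldsymbol{x})$ but applied to $q_\lambda$ and $\tilde q$ respectively. Then the bound reduces to a one-line triangle-inequality argument on the region-wise integrals.

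Concretely, the first step is to observe that for $\lambda\ge\tfrac{1}{2}$ the component $q_\lambda(\cdot\mid\boldsymbol{e}_k)$ is almost surely supported in $\mathcal R_k$ by Proposition~\ref{prop:interpolation_supp}; hence for each $k$,
\begin{equation*}
\int_{\mathcal R_k} q_\lambda(\boldsymbol{x})\,d\boldsymbol{x}
= \sum_{j=1}^{K} p_j \int_{\mathcal R_k} q_\lambda(\boldsymbol{x}\mid \boldsymbol{e}_j)\,d\boldsymbol{x}
= p_k,
\end{equation*}
because the $j\neq k$ summands vanish (their support lies in $\mathcal R_j$, disjoint from $\mathcal R_k$), while the $j=k$ summand contributes $p_k\cdot 1$.

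The second step is then a direct calculation: subtracting gives
\begin{equation*}
\hat p_k - p_k = \int_{\mathcal R_k} \bigl(\tilde q(\boldsymbol{x}) - q_\lambda(\boldsymbol{x})\bigr)\,d\boldsymbol{x},
\end{equation*}
so $|\hat p_k-p_k|\le \int_{\mathcal R_k}|\tilde q-q_\lambda|\,d\boldsymbol{x}$. Summing over $k$ and using that the $\mathcal R_k$ are pairwise disjoint with $\mathring\Delta^D\setminus\bigcup_k \mathcal R_k$ being a finite union of tie hyperplanes (Lebesgue-null, hence ignorable in the $L^1$ integral),
\begin{equation*}
\sum_{k=1}^{K}|\hat p_k-p_k|
\le \sum_{k=1}^{K}\int_{\mathcal R_k}|\tilde q-q_\lambda|\,d\boldsymbol{x}
\le \int_{\mathring\Delta^D}|\tilde q-q_\lambda|\,d\boldsymbol{x}
= \|\tilde q - q_\lambda\|_1.
\end{equation*}
Dividing by $2$ gives the stated TV bound, and the final claim about convergence and exact recovery follows immediately: if $\|\tilde q-q_\lambda\|_1\to 0$, then $\hat p\to p$ in TV, and since $\hat p_k$ is precisely the probability that $\arg\max$ of a sample from $\tilde q$ equals $\boldsymbol{e}_k$, the $\arg\max$ discretization of $\tilde q$ matches $p$ exactly in the limit.

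There is no real obstacle here; the only subtlety is verifying that the sets $\mathcal R_k$ do indeed partition $\mathring\Delta^D$ up to a Lebesgue-null set (the tie set $\{\boldsymbol{x}:x_k=x_j \text{ for some } j\neq k\}$), which is standard. The proof is essentially a bookkeeping exercise once Proposition~\ref{prop:interpolation_supp} is invoked to anchor $p_k$ as an integral of $q_\lambda$ over $\mathcal R_k$.
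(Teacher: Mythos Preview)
Your proposal is correct and follows essentially the same approach as the paper: invoke Proposition~\ref{prop:interpolation_supp} to write $p_k=\int_{\mathcal R_k} q_\lambda$, then bound $|\hat p_k-p_k|=|\int_{\mathcal R_k}(\tilde q-q_\lambda)|\le\int_{\mathcal R_k}|\tilde q-q_\lambda|$ and sum over the disjoint regions. Your version is slightly more explicit about the cross-term vanishing and the null tie set, but the argument is the same.
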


\begin{proof}
    Because $\lambda \ge \tfrac{1}{2}$ and Prop.~\ref{prop:interpolation_supp} the mixture places almost surely all mass of component $k$ inside $\mathcal{R}_k$, hence
$p_k = \int_{\mathcal{R}_k} q_\lambda(\boldsymbol{x})\,d\boldsymbol{x}$.
For any measurable set $B$, $\big|\int_B (\tilde q(\boldsymbol{x}) - q_\lambda(\boldsymbol{x}))\,d\boldsymbol{x}\big| \le \int_B |\tilde q(\boldsymbol{x}) - q_\lambda(\boldsymbol{x})|\,d\boldsymbol{x}$, giving a point-wise bound. Summing and using $\sum_k \int_{\mathcal{R}_k} | \tilde q(\boldsymbol{x}) - q_\lambda(\boldsymbol{x})|\,d\boldsymbol{x} \le \int | \tilde q(\boldsymbol{x}) - q_\lambda(\boldsymbol{x})|\,d\boldsymbol{x}$ yields 
\begin{equation*}
\tfrac{1}{2}\sum_{k=1}^{K} | \hat p_k - p_k |
\;\le\; \tfrac{1}{2}\,\|\tilde q - q_\lambda\|_{1}.    
\end{equation*}
\end{proof}

\subsection{Proof of proposition \ref{prop:interpolation_supp}}
\begin{proposition} \label{prop:interpolation_supp}
\textbf{Dirichlet Interpolation:}
Let $\boldsymbol{c} = \boldsymbol{e}_k$ for some $k\in\{1,..,K\}$. Let ${\boldsymbol{x}} := \lambda \boldsymbol{c} + (1-\lambda) \boldsymbol{\varepsilon}$
where $\boldsymbol{\varepsilon} \sim \mathrm{Dir}(\boldsymbol{\alpha})$ with $\alpha_i > 0$.
If $\lambda > \tfrac{1}{2}$, then $\arg\max {\boldsymbol{x}} = \boldsymbol{c}$. For $\lambda = \tfrac{1}{2}$, this holds almost surely under the distribution of $\boldsymbol{\varepsilon}$.
\end{proposition}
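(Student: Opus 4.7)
The plan is to show the claim by a direct componentwise comparison, exploiting only the simplex constraints on $\boldsymbol{\varepsilon}$ (and, for the boundary case, the absolute continuity of the Dirichlet law on $\Delta^D$).

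First I would write out the components of $\boldsymbol{x}=\lambda\boldsymbol{e}_k+(1-\lambda)\boldsymbol{\varepsilon}$: the $k$-th component is $x_k=\lambda+(1-\lambda)\varepsilon_k$, and for $j\neq k$ we have $x_j=(1-\lambda)\varepsilon_j$. The target $\arg\max \boldsymbol{x}=\boldsymbol{e}_k$ is equivalent to $x_k>x_j$ for every $j\neq k$, so I would reduce the claim to showing strict positivity of
\begin{equation*}
x_k-x_j \;=\; \lambda+(1-\lambda)(\varepsilon_k-\varepsilon_j).
\end{equation*}

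Next I would use the simplex membership $\boldsymbol{\varepsilon}\in\Delta^D$, which gives $\varepsilon_i\in[0,1]$ and hence $\varepsilon_k-\varepsilon_j\ge -\varepsilon_j\ge -1$, with equality only when $\varepsilon_j=1$ and $\varepsilon_k=0$. Substituting this worst case yields the universal lower bound $x_k-x_j\ge 2\lambda-1$. For $\lambda>\tfrac{1}{2}$ this is strictly positive deterministically, which settles the first half of the proposition without any probabilistic input.

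For the critical case $\lambda=\tfrac{1}{2}$, the bound only gives $x_k-x_j\ge 0$, with equality exactly on the measure-zero vertex event $\{\varepsilon_j=1,\varepsilon_k=0\}$. The key probabilistic step is then to argue that, since $\mathrm{Dir}(\boldsymbol{\alpha})$ with all $\alpha_i>0$ admits a density with respect to the $(K-1)$-dimensional Hausdorff measure on the simplex, the boundary event $\{\varepsilon_j\in\{0,1\}\ \text{for some }j\}$ has probability zero; a union bound over the $K-1$ competing indices $j\neq k$ then shows that almost surely $x_k>x_j$ for all such $j$ simultaneously. The only mildly delicate point is this null-set argument at the boundary, but it is immediate from the Lebesgue-absolute continuity of the Dirichlet distribution; everything else is an elementary algebraic inequality.
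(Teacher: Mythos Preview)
Your proposal is correct and essentially mirrors the paper's own argument: both write out the components, reduce to the inequality $\lambda+(1-\lambda)(\varepsilon_k-\varepsilon_j)>0$ (the paper rearranges this as $\frac{\lambda}{1-\lambda}>\varepsilon_j-\varepsilon_k$), bound $\varepsilon_j-\varepsilon_k\le 1$ via the simplex constraint, and handle the $\lambda=\tfrac{1}{2}$ case by observing that the equality event $\{\varepsilon_j=1,\varepsilon_k=0\}$ is Dirichlet-null. Your explicit lower bound $x_k-x_j\ge 2\lambda-1$ is a slightly tidier packaging of the same computation.
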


\begin{proof}
Let $\boldsymbol{c} \in \Delta^D$ be a vector such that $\boldsymbol{c} = \boldsymbol{e}_k$.
The noisy sample is ${\boldsymbol{x}} := \lambda \boldsymbol{c} + (1-\lambda) \boldsymbol{\varepsilon}$ with entries
\begin{equation*}
{x}_k = \lambda + (1-\lambda)\varepsilon_k, \qquad {x}_j = (1-\lambda)\varepsilon_j \quad \text{for } j \neq k.
\end{equation*}
We need to show that ${x}_k > {x}_j$ for all $j \neq k$ when $\lambda > \tfrac{1}{2}$.
This is equivalent to
\begin{equation*}
 \lambda + (1-\lambda)\varepsilon_k > (1-\lambda)\varepsilon_j, \iff \lambda > (1-\lambda)(\varepsilon_j - \varepsilon_k) \iff \frac{\lambda}{1-\lambda} > \varepsilon_j - \varepsilon_k.
\end{equation*}
Since $\varepsilon$ lies in the simplex, $\varepsilon_j - \varepsilon_k \leq 1$, and
for $\lambda > \tfrac{1}{2}$, we have $\frac{\lambda}{1 - \lambda} > 1 \geq \varepsilon_j - \varepsilon_k$.
Consequently, the inequality holds, implying
\begin{equation*}
{x}_k > {x}_j \quad \forall j \neq i,  \ \text{and} \ \arg\max_j {x}_j = k.
\end{equation*}
For the boundary case $\lambda = \tfrac{1}{2}$, the condition becomes
\begin{equation*}
\frac{\lambda}{1-\lambda} = 1, \quad \text{ then } \quad 1 \geq \varepsilon_j - \varepsilon_k \quad (\text{equivalently } {x}_k \geq {x}_j).
\end{equation*}
Note the equality ${x}_k = {x}_j$ occurs iff $\varepsilon_j - \varepsilon_k = 1$, i.e., $\varepsilon_j = 1$ and $\varepsilon_k = 0$.
Under any Dirichlet distribution with positive concentration parameters, this boundary event has probability zero.
Therefore, when $\lambda = \tfrac{1}{2}$ we have ${x}_k > {x}_j$ for all $j \neq k$ almost surely, and thus $\arg\max_j {x}_j = k$ almost surely.
\end{proof}

\subsection{Proof of proposition \ref{prop:euc_cov_supp}}
\begin{proposition} \label{prop:euc_cov_supp}
\textbf{Euclidean covariance:}
Let $\boldsymbol{x}\sim\mathrm{Dir}(\alpha,..,\alpha)$ with $\alpha>0$. Let $\boldsymbol{H}\in\mathbb{R}^{{D}\times K}$ be a Helmert matrix, and $\varphi$ the ILR transform. Then the covariance of $\boldsymbol{z} = \varphi(\boldsymbol{x})$ is
\begin{equation*}
\operatorname{Cov}(\boldsymbol{z})=\psi'(\alpha)\,\boldsymbol{I}_{D},   
\end{equation*}
where $\psi'$ is the trigamma function. In particular, the covariance of $\boldsymbol{z}$  is independent of $K$.
\end{proposition}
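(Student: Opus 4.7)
The plan is to compute the covariance of $\boldsymbol{z} = \boldsymbol{H}\log\boldsymbol{x}$ by combining the known second-order structure of $\log\boldsymbol{x}$ under a symmetric Dirichlet with the orthogonality properties of the Helmert matrix.

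First I would recall the standard Dirichlet log-moment identities. For $\boldsymbol{x}\sim\mathrm{Dir}(\alpha,\ldots,\alpha)$, differentiating the moment generating function of $\log\boldsymbol{x}$ (which is a product of Gamma functions divided by $\Gamma(K\alpha)$ after a logarithmic reparametrization of the normalizing constant) gives, for all $i,j$,
\begin{equation*}
\mathbb{E}[\log x_i] = \psi(\alpha)-\psi(K\alpha),\qquad
\mathrm{Var}(\log x_i) = \psi'(\alpha)-\psi'(K\alpha),\qquad
\mathrm{Cov}(\log x_i,\log x_j) = -\psi'(K\alpha)\ \ (i\neq j).
\end{equation*}
Equivalently, the covariance matrix of $\log\boldsymbol{x}\in\mathbb{R}^{K}$ admits the rank-one-plus-scalar form
\begin{equation*}
\boldsymbol{\Sigma}_{\log\boldsymbol{x}} \;=\; \psi'(\alpha)\,\boldsymbol{I}_{K} \;-\; \psi'(K\alpha)\,\boldsymbol{1}\boldsymbol{1}^{\top}.
\end{equation*}

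Next I would apply the linearity of covariance under the affine map $\boldsymbol{z}=\boldsymbol{H}\log\boldsymbol{x}$ to obtain
\begin{equation*}
\mathrm{Cov}(\boldsymbol{z}) \;=\; \boldsymbol{H}\,\boldsymbol{\Sigma}_{\log\boldsymbol{x}}\,\boldsymbol{H}^{\top} \;=\; \psi'(\alpha)\,\boldsymbol{H}\boldsymbol{H}^{\top} \;-\; \psi'(K\alpha)\,(\boldsymbol{H}\boldsymbol{1})(\boldsymbol{H}\boldsymbol{1})^{\top}.
\end{equation*}
The two defining properties of the Helmert matrix now finish the proof: its rows are an orthonormal basis of the hyperplane $\{\boldsymbol v\in\mathbb{R}^{K}:\sum_i v_i=0\}$, hence $\boldsymbol{H}\boldsymbol{H}^{\top}=\boldsymbol{I}_{D}$ and $\boldsymbol{H}\boldsymbol{1}=\boldsymbol{0}$. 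Substituting yields $\mathrm{Cov}(\boldsymbol{z})=\psi'(\alpha)\,\boldsymbol{I}_{D}$, independent of $K$.

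There is no real obstacle in this argument; the only non-routine ingredient is the Dirichlet log-covariance, which I would either cite from standard references on the Dirichlet/exponential family or derive in a brief line by differentiating $\log\Gamma$ twice. The conceptual point to emphasize in the write-up is that the cancellation of the $\psi'(K\alpha)$ term is not coincidental: it reflects the fact that $\boldsymbol{1}$ spans the null direction of the simplex (the constant shift along which $\log\boldsymbol{x}$ is redundant), and the Helmert map is precisely designed to project this direction out while preserving lengths, which simultaneously explains why the residual scale is exactly $\psi'(\alpha)$ and why $K$ drops out.
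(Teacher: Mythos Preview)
Your proposal is correct and follows essentially the same approach as the paper: compute $\operatorname{Cov}(\log\boldsymbol{x})=\psi'(\alpha)\boldsymbol{I}_K-\psi'(K\alpha)\boldsymbol{1}\boldsymbol{1}^\top$, sandwich by $\boldsymbol{H}$, and use $\boldsymbol{H}\boldsymbol{H}^\top=\boldsymbol{I}_D$ and $\boldsymbol{H}\boldsymbol{1}=\boldsymbol{0}$ to eliminate the $\psi'(K\alpha)$ term. Your added remark explaining the cancellation as the Helmert map projecting out the constant direction is a nice conceptual gloss that the paper does not include.
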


\begin{proof}
Recall $D:=K-1$ and $\boldsymbol{x} \sim \operatorname{Dir}(\alpha,..,\alpha)$.
Let $\psi$ denote the digamma function and $\psi' = \frac{d}{d x} \psi(x)$ the trigamma function. The first and second moments of the log-random variable are:
\begin{equation*}
\mathbb{E}[\log x_k]=\psi(\alpha)-\psi(K\alpha),
\qquad
\operatorname{Cov}( \log \boldsymbol{x})=  \psi'(\alpha)\,\boldsymbol{I}_K-\psi'(K\alpha)\,\mathbf{1}\mathbf{1}^\top.
\end{equation*}
Therefore the covariance of $\boldsymbol{z}=\boldsymbol{H}\log \boldsymbol{x}$ is
$$
\operatorname{Cov}(\boldsymbol{z})=\boldsymbol{H}\,\operatorname{Cov}({\log \boldsymbol{x}})\,\boldsymbol{H}^\top
= \psi'(\alpha)\,\boldsymbol{H} \,\boldsymbol{H}^\top - \psi'(K\alpha)\,\boldsymbol{H} \, \mathbf{1}\mathbf{1}^\top \,\boldsymbol{H}^\top.
$$
Use the orthonormality of $\boldsymbol{H}$ and the sum-to-zero property of each row: $\boldsymbol{H} \, \boldsymbol{H}^\top = \boldsymbol{I}_D$, and by construction of the ILR basis $\boldsymbol{H}\mathbf{1}= 0$. Substituting these simplifies the covariance to
$$
\operatorname{Cov}(\boldsymbol{z})=\psi'(\alpha)\,\boldsymbol{I}_D.
$$
Hence, the coordinates $z_k$ are uncorrelated and each has constant variance $\psi'(\alpha)$ independent of $K$, proving the proposition.
\end{proof}

\subsection{Proof of Theorem \ref{thm:isometry_supp}}
\begin{theorem}[Isometry, \citep{Egozcue2003}]\label{thm:isometry_supp}
Let $\langle\cdot,\cdot\rangle_A$ denote the Aitchison inner product on $\mathring \Delta^{D}$ and $\langle\cdot,\cdot\rangle_2$ the standard inner product on $\mathbb{R}^{D}$. For a Helmert matrix $\boldsymbol H\in\mathbb{R}^{D \times K}$, the ILR map satisfies
\begin{equation*}
    \langle \boldsymbol{x}, \boldsymbol{y} \rangle_A \,=\, \langle \varphi(\boldsymbol{x}), \varphi(\boldsymbol{y}) \rangle_2 \quad \forall \boldsymbol{x},\boldsymbol{y}\in \mathring\Delta^D,
\end{equation*}
and, in particular, the ILR map is an isometry between $(\mathring\Delta^{D},\langle\cdot,\cdot\rangle_A)$ and $(\mathbb{R}^{D},\langle\cdot,\cdot\rangle_2)$. 
\end{theorem}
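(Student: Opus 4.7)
My plan is to reduce both inner products to the same quadratic form in the log-vectors, via the orthogonal projection onto the hyperplane orthogonal to $\boldsymbol{1}$.

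\textbf{Step 1: rewrite the Aitchison inner product in matrix form.} Expanding $\log\tfrac{x_i}{x_j} = \log x_i - \log x_j$ in the double sum and collecting the four resulting terms, I would show that
\begin{equation*}
\langle \boldsymbol{x}, \boldsymbol{y}\rangle_A
= \langle \log \boldsymbol{x}, \log \boldsymbol{y}\rangle_2
- \tfrac{1}{K}\,(\boldsymbol{1}^\top \log \boldsymbol{x})(\boldsymbol{1}^\top \log \boldsymbol{y})
= (\log\boldsymbol{x})^\top \boldsymbol{P}\, (\log\boldsymbol{y}),
\end{equation*}
where $\boldsymbol{P} := \boldsymbol{I}_K - \tfrac{1}{K}\boldsymbol{1}\boldsymbol{1}^\top$ is the orthogonal projector onto the subspace $V := \{\boldsymbol{v}\in\mathbb{R}^K : \boldsymbol{1}^\top\boldsymbol{v}=0\}$. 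This is the main algebraic reduction; it is routine but must be carried out carefully because of the $\tfrac{1}{2K}$ normalization.

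\textbf{Step 2: identify $\boldsymbol{H}^\top\boldsymbol{H}$ with the projector $\boldsymbol{P}$.} By the characterization of the Helmert matrix stated in the paper, the rows of $\boldsymbol{H}\in\mathbb{R}^{D\times K}$ form an orthonormal basis of $V$ (so $\boldsymbol{H}\boldsymbol{H}^\top = \boldsymbol{I}_D$ and $\boldsymbol{H}\boldsymbol{1}=\boldsymbol{0}$). A standard linear algebra fact then gives that $\boldsymbol{H}^\top\boldsymbol{H}$ is the orthogonal projector onto $V$, which is exactly $\boldsymbol{P}$. I would either cite this or verify directly that $\boldsymbol{H}^\top\boldsymbol{H}$ is symmetric, idempotent, has image $V$ and kernel $\mathrm{span}(\boldsymbol{1})$, forcing $\boldsymbol{H}^\top\boldsymbol{H} = \boldsymbol{P}$.

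\textbf{Step 3: conclude.} Combining the two steps,
\begin{equation*}
\langle \varphi(\boldsymbol{x}), \varphi(\boldsymbol{y})\rangle_2
= (\boldsymbol{H}\log\boldsymbol{x})^\top(\boldsymbol{H}\log\boldsymbol{y})
= (\log\boldsymbol{x})^\top \boldsymbol{H}^\top\boldsymbol{H}\,(\log\boldsymbol{y})
= (\log\boldsymbol{x})^\top \boldsymbol{P}\,(\log\boldsymbol{y})
= \langle \boldsymbol{x},\boldsymbol{y}\rangle_A,
\end{equation*}
for all $\boldsymbol{x},\boldsymbol{y}\in\mathring\Delta^D$, which proves the inner-product identity. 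Since $\varphi$ is already a smooth bijection $\mathring\Delta^D\to\mathbb{R}^D$ (as recalled in Eq.~\ref{eq:ilr}) and the identity holds on tangent vectors by linearity of $\varphi$ in $\log\boldsymbol{x}$, this upgrades to an isometry of inner product spaces (and of the induced Riemannian structures).

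The only real obstacle is the bookkeeping in Step 1; the other steps are essentially reading off standard properties of the Helmert basis.
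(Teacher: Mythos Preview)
Your proposal is correct and follows essentially the same argument as the paper: both expand the Aitchison inner product as $(\log\boldsymbol{x})^\top(\boldsymbol{I}_K-\tfrac{1}{K}\boldsymbol{1}\boldsymbol{1}^\top)(\log\boldsymbol{y})$, identify $\boldsymbol{H}^\top\boldsymbol{H}$ with that centering projector using the orthonormal-basis property of the Helmert rows, and then match terms. Your explicit naming of the projector $\boldsymbol{P}$ and the remark that the identity extends to the Riemannian level are minor presentational additions, not a different route.
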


We provide the proof where the Helmert matrix is the orthonormal basis of the tangent space of the simplex $T_{\boldsymbol{x}}\mathring\Delta^D=\{\boldsymbol{v}\in \mathbb{R}^{K}: \sum_{i=1}^K v_i=0\}$. 
Refer to \citet{Egozcue2003} for a proof independent of the choice of basis for $T_{\boldsymbol{x}}\mathring\Delta^D$.
\begin{proof}    
Let $\boldsymbol{x},\boldsymbol{y}\in \mathring \Delta^D$, and set  $\boldsymbol{u}:=\log \boldsymbol{x}$ and $\boldsymbol{v}:=\log \boldsymbol{y}$, the Aitchison inner product is 
\begin{align*}
    \langle \boldsymbol{x},\boldsymbol{y}\rangle_A &= \frac{1}{2K} \sum_{i,j=1}^{K} 
\log\frac{x_i}{x_j}\log\frac{y_i}{y_j} 
    = \frac{1}{2K} \sum_{i,j=1}^{K}(u_i-u_j)(v_i-v_j), \\
 &=\frac{1}{2K} \sum_{i,j=1}^{K} 
 (u_iv_i- u_iv_j -u_jv_i + u_jv_j)= \langle \boldsymbol{u},\boldsymbol{v}\rangle_2 - \tfrac{1}{K} \langle \boldsymbol{u},\mathbf 1\rangle_2 \langle \boldsymbol{v},\mathbf 1\rangle_2.
\end{align*}
On the other hand, the rows of $\boldsymbol{H}$ form a basis of $T_{\boldsymbol{x}}\mathring\Delta^D$, then the orthogonal projection of a vector $\boldsymbol{y} \in \mathbb{R}^K$ onto $T_{\boldsymbol{x}}\mathring{\Delta}^D$ is given by  
$
\mathrm{proj}_{T_{\boldsymbol{x}}\mathring{\Delta}^D}(\boldsymbol{y}) = \boldsymbol{H}^\top \boldsymbol{H}\,\boldsymbol{y}.
$
This projection operator is equivalent to the matrix  
$\boldsymbol{P} = \boldsymbol{I}_K - \frac{1}{K}\mathbf{1}\mathbf{1}^\top$,
so that
$
\mathrm{proj}_{T_{\boldsymbol{x}}\mathring{\Delta}^D}(\boldsymbol{y}) = \left(\boldsymbol{I}_K - \frac{1}{K}\mathbf{1}\mathbf{1}^\top\right)\boldsymbol{y}
$. The norm in the Euclidean space is
\begin{align*}
    \langle\varphi(\boldsymbol{x}), \varphi(\boldsymbol{y})\rangle_2 = \boldsymbol{u}^\top \boldsymbol{H}^\top \boldsymbol{H} \, \boldsymbol{v} = 
    \boldsymbol{u}^\top (\boldsymbol{I}_K-\frac{1}{K}\mathbf 1\mathbf 1^\top) \, \boldsymbol{v} = 
    \langle \boldsymbol{u},\boldsymbol{v}\rangle_2 - \tfrac{1}{K} \langle \boldsymbol{u},\mathbf 1\rangle_2 \langle \boldsymbol{v},\mathbf 1\rangle_2.
\end{align*}
Therefore both the inner products coincide and in particular for all $\boldsymbol{x}\in\mathring\Delta^D$
\begin{equation*}
    \|\boldsymbol{x}\|_A = \|\varphi(\boldsymbol{x})\|_2.
\end{equation*}

\end{proof}

\subsection{Stick-breaking transform} \label{app:unit_simplex}
\citet{Carpenter2017} give an alternative formulation of the SB inverse transform. We prove its equivalence to the SB inverse transform, and use this equivalence to derive a simpler expression of the Jacobian determinant.

The inverse unit-simplex transform (US) is defined for $1\leq k\leq D$
\begin{equation} \label{eq:us}
\begin{aligned}
\varphi^{-1}: \mathbb{R}^{D} \to \Delta^{D}, 
\quad \boldsymbol{z} \mapsto \boldsymbol{x}, 
\quad x_k = \left( 1 - \sum_{i=1}^{k-1} x_{i} \right) \sigma(y_k), \ y_k=   z_k + \log \frac{1}{K - k},
\end{aligned}
\end{equation}
where $\sigma(\cdot)$ denotes the sigmoid function and the last entry is $x_{K}= 1-\sum_{i=1}^D x_i$. Proposition~\ref{prop:sb} shows the equivalence between the stick-breaking and unit-simplex inverse transforms.

\begin{proposition} \label{prop:sb}
The SB inverse transform and the US inverse transform are equal.
\end{proposition}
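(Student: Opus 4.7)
The plan is to prove equality coordinate-wise by induction on $k$, leveraging the identity $1-\sigma(y) = 1/(1+e^y)$ which bridges the two representations. Note that the shift $y_k = z_k + \log\tfrac{1}{K-k}$ is applied identically in both formulations, so it suffices to show that the SB map $\boldsymbol{y}\mapsto \boldsymbol{x}^{\mathrm{SB}}$ and the US map $\boldsymbol{y}\mapsto \boldsymbol{x}^{\mathrm{US}}$ coincide.

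First, I would rewrite the SB inverse by peeling off the last factor: since $\mathrm{mlr}^{-1}(\boldsymbol{y})_k = e^{y_k}/\prod_{i=1}^k(1+e^{y_i})$ and $e^{y_k}/(1+e^{y_k}) = \sigma(y_k)$, we have
\begin{equation*}
x_k^{\mathrm{SB}} \;=\; \sigma(y_k)\,\prod_{i=1}^{k-1}\frac{1}{1+e^{y_i}}.
\end{equation*}
Compared to $x_k^{\mathrm{US}} = \sigma(y_k)\bigl(1-\sum_{i=1}^{k-1}x_i^{\mathrm{US}}\bigr)$, the two definitions agree iff the prefactors match. So the core lemma to prove is the identity
\begin{equation*}
\prod_{i=1}^{k-1}\frac{1}{1+e^{y_i}} \;=\; 1-\sum_{i=1}^{k-1}x_i^{\mathrm{SB}}, \qquad 1\le k \le D,
\end{equation*}
after which $x_k^{\mathrm{SB}} = x_k^{\mathrm{US}}$ follows by straightforward induction on $k$.

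I would prove the identity by induction. The base $k=1$ is trivial (empty product and empty sum both give $1$). For the step, assume the identity at $k$ and compute
\begin{equation*}
1-\sum_{i=1}^{k}x_i^{\mathrm{SB}} \;=\; \Bigl(1-\sum_{i=1}^{k-1}x_i^{\mathrm{SB}}\Bigr) - x_k^{\mathrm{SB}}
\;=\; \Bigl(1-\sum_{i=1}^{k-1}x_i^{\mathrm{SB}}\Bigr)\bigl(1-\sigma(y_k)\bigr),
\end{equation*}
where the second equality uses the form $x_k^{\mathrm{SB}} = \sigma(y_k)\bigl(1-\sum_{i=1}^{k-1}x_i^{\mathrm{SB}}\bigr)$ from the induction hypothesis together with the peeled-off expression above. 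Since $1-\sigma(y_k) = 1/(1+e^{y_k})$, this equals $\prod_{i=1}^{k}1/(1+e^{y_i})$, closing the induction.

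Finally, I would check consistency of the last coordinate: both SB and US define $x_K := 1-\sum_{i=1}^D x_i$, so equality at indices $1,\dots,D$ forces equality of $x_K$. There is no real obstacle here; the proof is a one-line induction once $\sigma$ is recognized on both sides, and the only care needed is to keep track of the $k-1$ versus $k$ indexing in the product and sum.
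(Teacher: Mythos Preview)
Your proof is correct and follows essentially the same inductive route as the paper: both arguments hinge on the identity $1-\sigma(y)=1/(1+e^{y})$ and the telescoping ``remaining stick'' relation $1-\sum_{i=1}^{k}x_i=\prod_{i=1}^{k}(1-\sigma(y_i))$. The only cosmetic difference is that you isolate this relation as a lemma proved first, whereas the paper runs the induction directly on $x_{k+1}$ and then states the relation as a corollary.
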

\begin{proof}
Proof by induction. 
Recall the SB inverse transform is $x_k = \prod_{i=1}^{k-1}(1- \sigma(y_{i})) \sigma(y_k)$. 

For the base case $k=2$, we have $x_1=\sigma(y_{1})$ and  $x_2 = (1-\sigma(y_{1}))\sigma(y_{2})$ in both cases.

Induction step, we assume SB and US coincide for $k$. 

Let us prove  the equality for $k+1$,
\begin{align*}
    x_{k + 1} = &\left( 1 - \sum_{i=1}^{k} x_{i} \right) \sigma(y_{k+1}) =
    \left( 1 - \sum_{i=1}^{k-1} x_{i} -   x_{k} \right) \sigma(y_{k+1}) 
    \;\overset{\substack{1-\sum_{i=1}^{k-1} x_i = \tfrac{x_k}{\sigma(y_{k})}}}{=}
    \left( \frac{x_k}{\sigma(y_{k})} -   x_{k} \right) \sigma(y_{k+1}) \\
       =&\left( \prod_{i=1}^{k-1}(1-\sigma(y_{i})) - \prod_{i=1}^{k-1}(1-\sigma(y_{i}))\sigma(y_{k}) \right) \sigma(y_{k+1})
     = \prod_{i=1}^{k}(1-\sigma(y_{i})) \sigma(y_{k+1}).
\end{align*}
\end{proof}
\paragraph{Computation of the determinant}
Take $y_k := z_k+ \log \left( \frac{1}{K - k}\right)$, then $x_k = \frac{e^{y_k}}{\prod_{i=1}^k (1+e^{y_i})}$
\begin{align*}
    \pdv{x_k}{z_k} &= \frac{e^{y_k}}{\prod_{i=1}^k (1+e^{y_i})}\left(1-\frac{e^{y_k}}{1+e^{y_k}}\right) = \frac{e^{y_k}}{\prod_{i=1}^k (1+e^{y_i})} \left(\frac{1}{1+e^{y_k}}\right)= x_k\left(\frac{1}{1+e^{y_k}}\right)
\end{align*}
Since the Jacobian is lower triangular, the determinant is the product of the diagonal terms
\begin{align*}
    \det \boldsymbol{J}_\varphi & = \prod_{k=1}^{D} x_k\left(\frac{1}{1+e^{y_k}}\right).
\end{align*}
As a side result of Proposition~\ref{prop:sb} we have the equality $1 - \sum_{i=1}^{k}x_i=  \prod_{i=1}^{k}(1-\sigma(y_{i})) $, thus  $ x_{K}=\prod_{i=1}^{D}(1- \sigma(y_i))$, and we obtain 
\begin{align*}
    \det \boldsymbol{J}_\varphi & = \prod_{k=1}^{K} x_k.
\end{align*}

\subsection{Isometric logratio transform} \label{app:ilr}

We state the matrix determinant Lemma~\ref{lemma:mdl} which is useful throughout the derivations.
\begin{lemma} Matrix determinant.  \label{lemma:mdl}
Let $\boldsymbol{A}\in \mathbb{R}^{D\times D}$ be a full rank square matrix and  ${u} \in \mathbb{R}^D$ a vector, then
\begin{equation*}
\det(\boldsymbol{A} + {u}{u}^\top) = \left(1 + {u}^\top \boldsymbol{A}^{-1} {u} \right) \det(\boldsymbol{A}).
\end{equation*}

\end{lemma}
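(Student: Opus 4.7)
This is the classical matrix determinant lemma for a rank-one update, so the plan is to derive it using the Schur complement of a suitably augmented $(D{+}1)\times(D{+}1)$ block matrix and evaluate its determinant in two ways.

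First I would introduce the auxiliary matrix
\begin{equation*}
\boldsymbol{M} := \begin{pmatrix} \boldsymbol{A} & \boldsymbol{u} \\ -\boldsymbol{u}^\top & 1 \end{pmatrix}.
\end{equation*}
Applying the Schur complement with respect to the (trivially invertible) bottom-right scalar block gives
\begin{equation*}
\det \boldsymbol{M} = 1 \cdot \det\!\big(\boldsymbol{A} - \boldsymbol{u}\, 1^{-1}\,(-\boldsymbol{u}^\top)\big) = \det(\boldsymbol{A} + \boldsymbol{u}\boldsymbol{u}^\top).
\end{equation*}
Since $\boldsymbol{A}$ is full rank, applying the Schur complement in the other direction (with respect to the top-left block) instead yields
\begin{equation*}
\det \boldsymbol{M} = \det(\boldsymbol{A}) \cdot \det\!\big(1 - (-\boldsymbol{u}^\top)\boldsymbol{A}^{-1}\boldsymbol{u}\big) = \det(\boldsymbol{A})\,\big(1 + \boldsymbol{u}^\top \boldsymbol{A}^{-1}\boldsymbol{u}\big).
\end{equation*}
Equating the two expressions for $\det \boldsymbol{M}$ gives the claim.

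An equivalent path would be to first factor out $\det(\boldsymbol{A})$, reducing the identity to $\det(\boldsymbol{I}_D + \boldsymbol{A}^{-1}\boldsymbol{u}\boldsymbol{u}^\top) = 1 + \boldsymbol{u}^\top\boldsymbol{A}^{-1}\boldsymbol{u}$, and then invoke Sylvester's (Weinstein--Aronszajn) identity $\det(\boldsymbol{I}_m + \boldsymbol{X}\boldsymbol{Y}) = \det(\boldsymbol{I}_n + \boldsymbol{Y}\boldsymbol{X})$ with $\boldsymbol{X} = \boldsymbol{A}^{-1}\boldsymbol{u} \in \mathbb{R}^{D\times 1}$ and $\boldsymbol{Y} = \boldsymbol{u}^\top \in \mathbb{R}^{1\times D}$, which collapses the $D$-dimensional determinant to the scalar on the right-hand side. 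There is no substantive obstacle in either route; the only care needed is to confirm that the invertibility hypotheses for the Schur complement formula hold, which is guaranteed by the full-rank assumption on $\boldsymbol{A}$.
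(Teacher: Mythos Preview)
Your proof is correct; both routes (Schur complement on the augmented block matrix, or factoring out $\det(\boldsymbol{A})$ and applying Sylvester's identity) are standard and valid derivations of the matrix determinant lemma. The paper itself does not prove this lemma at all: it merely states it as a known tool and then invokes it in the Jacobian-determinant computations for the ILR and simplex-to-sphere transforms, so there is no approach to compare against and your argument stands on its own.
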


The ILR transform is $\boldsymbol{z}= \boldsymbol{H} \log \boldsymbol{x}$, we fix the last entry as $ x_{K} = 1 - \sum_{i=1}^D x_i $.
The entries of the Jacobian matrix are: 
\begin{equation*}
\frac{\partial z_i}{\partial x_j} =
\frac{h_{ij}}{x_j} - \frac{h_{i,K}}{x_{K}}.
\end{equation*}
The Jacobian can be written in matrix form
\begin{equation*}
\boldsymbol{J}_\varphi = \boldsymbol{H}_{1:D,1:D} \operatorname{diag}\left( \frac{1}{x_1}, \dots, \frac{1}{x_D} \right) - \frac{1}{x_{K}} \boldsymbol{h}_{K}\mathbf{1}^\top,
\end{equation*}
where $\boldsymbol{h}_{K}$ is the last column of $\boldsymbol{H}$ and $\mathbf{1}$ is the vector of ones. 
A property of the Helmert matrix is that the columns of $\boldsymbol{H}$ sum to zero, hence $\boldsymbol{h}_{K} = -\boldsymbol{H}_{1:D,1:D} \mathbf{1}$ and
\begin{equation*}
\boldsymbol{J}_\varphi = \boldsymbol{H}_{1:D,1:D} \left( \operatorname{diag}\left( \frac{1}{x_1}, \dots, \frac{1}{x_D} \right) + \frac{1}{x_{K}}\mathbf{1}\mathbf{1}^\top  \right) .
\end{equation*}

The determinant can be computed with the help of Lemma~\ref{lemma:mdl},
\begin{align*}
    \det(\boldsymbol{J}_\varphi) &= \det(\boldsymbol{H}_{1:D,1:D}) \prod_{i=1}^{D} \tfrac{1}{x_i}
    \left(
    1 + \frac{1}{x_{K} }\sum_{i=1}^{D} x_i
    \right)    \\
    &=
    \det(\boldsymbol{H}_{1:D,1:D}) \prod_{i=1}^{D} \frac{1}{x_i}
    \left(
    1 + \frac{1}{x_{K} } (1- x_{K})
    \right) \\
    &= \det(\boldsymbol{H}_{1:D,1:D}) \prod_{i=1}^{K} \frac{1}{x_i}. \\
\end{align*}
The determinant of the reduced Helmert matrix is $\det(\boldsymbol{H}_{1:D,1:D}) = \frac{1}{\sqrt{K}}$.

\paragraph{Computational complexity}
We derive the linear computational complexity of the ILR bijection.
The (non-full) Helmert matrix $\boldsymbol{H}\in\mathbb{R}^{D\times K}$ has a simple recursive structure.  For $i=1,\dots,D$, the $i$-th row contains $i$ entries equal to
$1/\sqrt{i(i+1)}$, one entry equal to $-\,i/\sqrt{i(i+1)}$ in column $i{+}1$,
and zeros elsewhere:
\begin{equation*}
    \boldsymbol{H} =
\begin{bmatrix}
\frac{1}{\sqrt{1\cdot 2}} & -\frac{1}{\sqrt{1\cdot 2}} & 0 & 0 & \cdots & 0 \\[6pt]
\frac{1}{\sqrt{2\cdot 3}} & \frac{1}{\sqrt{2\cdot 3}} &
-\frac{2}{\sqrt{2\cdot 3}} & 0 & \cdots & 0 \\[6pt]
\frac{1}{\sqrt{3\cdot 4}} & \frac{1}{\sqrt{3\cdot 4}} &
\frac{1}{\sqrt{3\cdot 4}} &
-\frac{3}{\sqrt{3\cdot 4}} & \cdots & 0 \\
\vdots & \vdots & \vdots & \vdots & \ddots & \vdots \\
\frac{1}{\sqrt{D\cdot K}} &
\frac{1}{\sqrt{D\cdot K}} & \cdots &
\frac{1}{\sqrt{D\cdot K}} &
\frac{1}{\sqrt{D\cdot K}} &
-\frac{D}{\sqrt{D\cdot K}}
\end{bmatrix}.
\end{equation*}
Let $\boldsymbol{v}=\log\boldsymbol{x}$.  
Then $\boldsymbol{z} = \boldsymbol{H} \boldsymbol{v}$ has components
\begin{equation*}
z_i
= \frac{1}{\sqrt{i(i+1)}}\!\left(\sum_{j=1}^i v_j - i\,v_{i+1}\right).
    \end{equation*}
Defining the cumulative sum $S_i=\sum_{j=1}^i v_j$ (computed in $\mathcal{O}(K)$), we obtain
\begin{equation*}
z_i = \frac{S_i - i\,v_{i+1}}{\sqrt{i(i+1)}}, \qquad i=1,\dots,D,    
\end{equation*}  
Thus the full matrix–vector product $\boldsymbol{z}= \boldsymbol{H} \boldsymbol{v}$ can be computed in $\mathcal{O}(K)$ time without explicitly forming $\boldsymbol{H}$.

\subsection{Simplex to sphere transform } \label{app:transf_sphere}
Let $\boldsymbol{z}\in \mathbb{S}_+^D$ such that the sphere bijection is $\boldsymbol{z} = \varphi(\boldsymbol{x})=\sqrt{\boldsymbol{x}}$ for $\boldsymbol{x}\in\Delta^D$.
A direct computation gives the Jacobian $\boldsymbol{J}_{\varphi}\in\mathbb{R}^{K\times D}$
\begin{equation*}
\boldsymbol{J}_{\varphi} = \frac{dz}{d x_{1:D}} =
\begin{bmatrix}
\frac{1}{2\sqrt{x_1}} & 0 & \cdots & 0 \\
0 & \frac{1}{2\sqrt{x_2}} & \cdots & 0 \\
\vdots & \vdots & \ddots & \vdots \\
0 & 0 & \cdots & \frac{1}{2\sqrt{x_{D}}} \\
-\frac{1}{2\sqrt{x_{K}}} & -\frac{1}{2\sqrt{x_{K}}} & \cdots & -\frac{1}{2\sqrt{x_{K}}}
\end{bmatrix}_{K \times D}.
\end{equation*}

The pull-back metric is $\boldsymbol{G}_{\varphi} = \boldsymbol{J}_{\varphi}^\top \boldsymbol{J}_{\varphi} \in \mathbb{R}^{D\times D}$;
\begin{align*}
\boldsymbol{G}_{\varphi} = \frac{1}{4}
\left(
\operatorname{diag}\left(\frac{1}{x_1}, \dots, \frac{1}{x_{D}}\right)
+ \frac{1}{x_{K}} \mathbf{1}_{D}\mathbf{1}_{D}^\top
\right),
\end{align*}
This is a rank-1 update of a diagonal matrix,  set $\boldsymbol{A} = \operatorname{diag}\left( \frac{1}{x_1}, \dots, \frac{1}{x_{D}} \right)$, and $u = \sqrt{\frac{1}{x_{K}}} \cdot \mathbf{1}_{D}$, we obtain
\begin{equation*}
1+{u}^\top \boldsymbol{A}^{-1} {u} = \frac{1}{x_{K}} \sum_{i=1}^{D} x_i = 1+\frac{1 - x_{K}}{x_{K}}  = \frac{1 }{x_{K}} .
\end{equation*}
Due to Lemma~\ref{lemma:mdl} the determinant of $\boldsymbol{G}_{\varphi}$ and the volume element are:
\begin{equation*}
\det(\boldsymbol{G}_{\varphi})= \frac{1}{4^{D}}   \prod_{i=1}^{K} \frac{1}{x_i}, \quad 
\sqrt{\det(\boldsymbol{G}_{\varphi})} = \frac{1}{2^{D}} \prod_{i=1}^{K} \frac{1}{\sqrt{x_i}} .
\end{equation*}

\subsection{Categorical probabilities estimation} \label{app:logprob}
We have constructed the estimator of the categorical probabilities 
$$
\widehat{\Pr}(C{=}k) = \tfrac{q_{\theta}(\boldsymbol{\mu}^{(k)})}{q_{\lambda}(\boldsymbol{\mu}^{(k)}\mid \boldsymbol{e}_k)}.$$
For its computation we need two components: the log densities of our model in the simplex $q_\theta(\boldsymbol{x})$ for $\boldsymbol{x} \in \mathring\Delta^{D}$ and the true densities for each mixture component $q_\lambda(\boldsymbol{x}\mid \boldsymbol{e}_k)$.

\paragraph{Computing the distribution of the model in the simplex}
Recall that $\boldsymbol{x}\in \mathring\Delta^{D}$ and $\boldsymbol{z}\in \mathbb{R}^{D}$ and $\boldsymbol{v}^\theta_t$ is the vector field of the flow model. The density in the Euclidean space is given by the instantaneous change of variable formula:
\begin{equation}
    \log p_{1}(\boldsymbol{z}_1) = \log p_{0}(\boldsymbol{z}_0) - \int_0^1 \mathrm{div}(\boldsymbol{v}^\theta_s) (\boldsymbol{z}_s)\,\mathrm{d} s. \label{eq:cnf_logdensity}
\end{equation}
Change of variables for the transformation $\boldsymbol{x} = \varphi^{-1}(\boldsymbol{z}_1)$ gives the density on $\mathring\Delta^D$:
\begin{equation}
    \log q_\theta(\boldsymbol{x}) = \log p_{1}(\varphi(\boldsymbol{x})) + \log \left|\pdv{\varphi(\boldsymbol{x})}{\boldsymbol{x}}\right|.
    \label{eq:cov}
\end{equation}
Combining Equations~\eqref{eq:cnf_logdensity} and~\eqref{eq:cov} we obtain the density over the simplex:
\begin{align*}
    \log q_{\theta}(\boldsymbol{x}) &= \log p_{0}(\boldsymbol{z}_0) -  \int_0^1 \mathrm{div}(\boldsymbol{v}^\theta_s) (\boldsymbol{z}_s)\,\mathrm{d} s + \log \left|\pdv{\boldsymbol{z}_1}{\boldsymbol{x}}\right|.\\
\end{align*}
If $\boldsymbol{z}_0$ is distributed as a standard Gaussian in $\mathbb{R}^D$ then $p_{0}(\boldsymbol{z}_0) = \mathcal{N}(\boldsymbol{z}_0\mid \boldsymbol{0},\boldsymbol{I})$. If the base distribution is the uniform distribution, $\boldsymbol{x}_0\sim \mathrm{Unif}(\Delta^D)$, on the simplex, then $p_{0}$ is computed with an additional change of variables
\begin{equation*}
   \log  p_{0}(\boldsymbol{z}_0) = \log p_{0}(\varphi^{-1}(\boldsymbol{z}_0))  + \log \left|\pdv{\varphi^{-1}(\boldsymbol{z}_0)}{\boldsymbol{z}_0}\right|.
\end{equation*}

\paragraph{Computing the distribution of the mixture components}
The Dirichlet interpolation moves discrete data from the vertices to a Dirichlet mixture. Each mixture component conditioned on  $\boldsymbol{e}_k$ has distribution $q_\lambda(\boldsymbol{x}\mid \boldsymbol{e}_k)$. Let us compute this distribution.
Let $\boldsymbol{\varepsilon}\sim \mathrm{Dir}(\alpha)$ with density $p_{\boldsymbol{\varepsilon}}(\boldsymbol{v})$ and $\alpha>0$. Define the affine map $f:\,\boldsymbol{\varepsilon} \mapsto \boldsymbol{x}=\lambda \boldsymbol{e}_k+(1-\lambda)\boldsymbol{\varepsilon}$. Its inverse is
\begin{equation*}
f^{-1}(\boldsymbol{x})=\frac{\boldsymbol{x}-\lambda \boldsymbol{e}_k}{1-\lambda}.
\end{equation*}
The mapping acts as a scaling by factor $(1-\lambda)$ in $D$ dimensions, hence the Jacobian absolute determinant is
\begin{equation*}
\left|\det \boldsymbol{J}_f^{-1}\right|=
\frac{1}{(1-\lambda)^{D}}.
\end{equation*}
By change of variables,
\begin{equation*}
q_\lambda(\boldsymbol{x}\mid \boldsymbol{e}_k)=p_{\boldsymbol{\varepsilon}}\big(f^{-1}(\boldsymbol{x})\big)\,\big|\det \boldsymbol{J}_f^{-1}\big|.
\end{equation*}
Multiplying by the Jacobian factor $\tfrac{1}{(1-\lambda)^{D}}$ yields  the final density (supported on the truncated simplex $\{\boldsymbol{x} \in \mathring\Delta^D: x_k\ge \lambda\}$)
\begin{equation*}
q_\lambda(\boldsymbol{x}\mid \boldsymbol{e}_k) =\frac{1}{(1-\lambda)^{D}}\mathrm{Dir}(f^{-1}(\boldsymbol{x}); \alpha).
\end{equation*}

\subsection{Standardize the data}
To ensure that the mean compositions of different mixture components are comparable across varying dimensions, we examine how their Aitchison norms scale with the number of categories. The scaling determines how far each component is from the zero vector in $\mathbb{R}^D$. Recall $D = K - 1$ and 
$
\boldsymbol{\mu}^{(k)} = \lambda \boldsymbol{e}_k + (1 - \lambda)\tfrac{1}{K}.
$
The squared Aitchison norm of $\boldsymbol{\mu}^{(k)}$ is
\begin{align}
    \|\boldsymbol{\mu}^{(k)}\|_A^2 
    = \frac{1}{2K}\sum_{i=1}^K\sum_{j=1}^K \left(\log \frac{\mu^{(k)}_i}{\mu^{(k)}_j}\right)^2 
     = \frac{1}{K} \sum_{j \neq k} \left(\log \frac{\mu^{(k)}_k}{\mu^{(k)}_j}\right)^2 
     = \frac{D}{K} \left[\log \left(1 + \frac{K\lambda}{1 - \lambda}\right)\right]^2.
     \label{eq:norm_mean}
\end{align}
For $\lambda = \tfrac{1}{2}$, this becomes
\begin{equation*}
    \|\boldsymbol{\mu}^{(k)}\|_A = \sqrt{\frac{D}{K}} \, \log(K+1).
\end{equation*}
The mean compositions therefore move logarithmically farther from the origin as the number of categories increases, indicating a dimensionality-dependent scaling. Although this effect could be removed by normalizing with $1 / \|\boldsymbol{\mu}^{(k)}\|_A$, our preliminary experiments showed that this adjustment had no notable benefit (see Section~\ref{app:experiments}).

\section{EXPERIMENTAL DETAILS} \label{app:experiments}
The empirical experiments were done in two distinct computing environments for practical reasons, with differing hardware. Hence, we indicate the hardware separately for each experiment.

\paragraph{Compositional Data}
We evaluate the models by drawing $5000$ samples using the Dopri5 solver. 
The velocity field is modeled with a fully connected network consisting of $4$ hidden layers with $512$ units each. 
Training is performed on AMD Rome 7H12 CPUs on a computer cluster.

\paragraph{Binarized MNIST}
We generate samples and estimate likelihoods using the Euler solver with $300$ steps for all four variants of our method. Non–cherry-picked generated examples are shown in Fig.~\ref{fig:bmnist_samples}. 
Following \citet{Cheng2024}, we adopt the CNN architecture of \citet{Song2020}, modified such that each convolutional layer receives a distinct time embedding. 
The negative log-likelihood (NLL) approximation is standardized by evaluating the log-density on both the test image $x$ and its flipped version $1{-}x$, ensuring the total sum is one.  
FID statistics (mean and covariance) are computed with the InceptionV3 model \citep{Szegedy2016} over the entire training dataset, and FID scores are calculated between $1000$ generated samples and the training data. 
We use the same hyperparameters as \citet{Cheng2024}: batch size $256$ and initial learning rate $3\times10^{-4}$. 
Each model is trained for approximately $500$ epochs on a single NVIDIA Volta V100 GPU.

\paragraph{Promoter Design}
Following \citet{Avdeyev2023}, sequences in the training data are randomly offset by up to $10$ positions during training. 
The velocity field is parameterized by the same network as in \citet{Avdeyev2023}, consisting of $20$ stacks of one-dimensional convolutional layers. 
Training is conducted for $200\text{K}$ steps on a single NVIDIA Ampere A100 GPU (40GB) over the hyperparameter grid shown in Table~\ref{tab:promoter_hyperparams}. 
Samples are generated using the Euler solver with $300$ steps for all four method variants. 
Evaluation follows \citet{Avdeyev2023}, using the Sei model with the H3K4me3 chromatin mark to predict active promoters on both generated and test sequences. 
Performance is measured via the squared difference between Sei predictions on both datasets (SP-MSE). 
For each run, model weights are selected based on the CFM validation loss, and the best hyperparameters are chosen with respect to SP-MSE on the validation set.

\paragraph{Text8}
Samples are generated using the Euler solver with $300$ steps for all method variants. 
The velocity field is modeled with a 12-layer diffusion transformer, following \citet{Lou2024,Cheng2024}. 
We conduct a grid search over hyperparameters and select the best model based on validation error. 
Only the ILR bijection is considered, and the hyperparameter grid is:
\[
\text{OT} = [\text{False}, \text{True}], \quad
\text{scaling (Eq.~\ref{eq:norm_mean})} = [\text{True}, \text{False}], \quad
\text{batch size} = [128, 216], \quad
\text{lr} = [10^{-4}, 2\times10^{-4}].
\]
Training is performed in parallel on 4 NVIDIA Ampere A100 GPUs (40GB) for approximately $400$ epochs over 3 days.

\paragraph{Scalability} \label{sec:scalability_supp}
We evaluate scalability by generating $N=10^5$ samples $\{\hat{\boldsymbol{x}}^{(i)}\}_{i=1}^N$ with the Euler solver using $200$ steps and estimating 
$\hat{\boldsymbol{p}} = \tfrac{1}{N}\sum_{i=1}^N \hat{\boldsymbol{x}}^{(i)}$. 
The true distribution $\boldsymbol{p}\in\Delta^{D}$ is defined as $p_1=\tfrac{1}{2}$ and $[p_2,\ldots,p_K]\sim\tfrac{1}{2}\mathrm{Unif}(\Delta^{D})$. 
The velocity field is modeled by a fully connected network with $4$ hidden layers of $512$ units. 
The input dimension is $D{+}64$ for our method and $K{+}64$ for SFM, where $64$ corresponds to the dimension of the sinusoidal time embedding. 
All models share the same fixed hyperparameters. Training is performed on AMD Rome 7H12 CPUs on a computer cluster.

\begin{table}
    \centering
    \caption{Values of SP-MSE on validation and test data for the tested hyperparameters. WD is weight decay and $\beta_1$ is a parameter of the Adam optimizer. }
    \label{tab:promoter_hyperparams}
\begin{tabular}{llrrlllrll}
\toprule
Method & Bijection & OT & Batch & Opt & WD & $\beta_1$ & Step & SP-MSE(val) & SP-MSE(test) \\
\midrule
\ourmethod & SB & False & 64 & Adam & 0 & 0.85 & 40000 & 0.0251 & {0.0278} \\
\ourmethod & SB & False & 64 & Adam & 0 & 0.95 & 40000 & 0.0321 & 0.0341 \\
\ourmethod & SB & False & 128 & Adam & 0 & 0.85 & 30000 & 0.0327 & 0.0353 \\
\ourmethod & SB & False & 128 & Adam & 0 & 0.95 & 30000 & 0.0406 & 0.0443 \\
\ourmethod & SB & False & 128 & Adam & $10^{-5}$ & 0.85 & 120000 & 0.0435 & 0.0447 \\
\ourmethod & SB & False & 64 & Adam & $10^{-5}$ & 0.85 & 200000 & 0.0506 & 0.0512 \\
\ourmethod & SB & False & 128 & Adam & $10^{-5}$ & 0.95 & 120000 & 0.0509 & 0.0514 \\
\ourmethod & SB & False & 64 & Adam & $10^{-5}$ & 0.95 & 200000 & 0.0549 & 0.0554 \\
\hline
\ourmethod & SB & True & 64 & Adam & 0 & 0.85 & 40000 & 0.0213 & {0.0214} \\
\ourmethod & SB & True & 128 & Adam & 0 & 0.85 & 30000 & 0.0314 & 0.0325 \\
\ourmethod & SB & True & 128 & Adam & $10^{-5}$ & 0.95 & 120000 & 0.0363 & 0.038 \\
\ourmethod & SB & True & 128 & Adam & 0 & 0.95 & 30000 & 0.0387 & 0.0409 \\
\ourmethod & SB & True & 128 & Adam & $10^{-5}$ & 0.85 & 120000 & 0.0387 & 0.0392 \\
\ourmethod & SB & True & 64 & Adam & $10^{-5}$ & 0.95 & 190000 & 0.0405 & 0.0435 \\
\ourmethod & SB & True & 64 & Adam & 0 & 0.95 & 40000 & 0.0591 & 0.0569 \\
\ourmethod & SB & True & 64 & Adam & $10^{-5}$ & 0.85 & 190000 & 0.0613 & 0.0642 \\
\midrule
\ourmethod & ILR & False & 64 & Adam & 0 & 0.85 & 40000 & 0.0252 & 0.0259 \\
\ourmethod & ILR & False & 64 & Adam & 0 & 0.95 & 40000 & 0.0321 & 0.0335 \\
\ourmethod & ILR & False & 128 & Adam & 0 & 0.85 & 30000 & 0.0328 & 0.0346 \\
\ourmethod & ILR & False & 128 & Adam & 0 & 0.95 & 30000 & 0.0406 & 0.0443 \\
\ourmethod & ILR & False & 128 & Adam & $10^{-5}$ & 0.85 & 120000 & 0.0436 & 0.0449 \\
\ourmethod & ILR & False & 64 & Adam & $10^{-5}$ & 0.85 & 200000 & 0.0508 & 0.0526 \\
\ourmethod & ILR & False & 128 & Adam & $10^{-5}$ & 0.95 & 120000 & 0.0511 & 0.0511 \\
\ourmethod & ILR & False & 64 & Adam & $10^{-5}$ & 0.95 & 200000 & 0.0548 & 0.0553 \\
\hline
\ourmethod & ILR & True & 64 & Adam & 0 & 0.85 & 40000 & 0.0213 & 0.0224 \\
\ourmethod & ILR & True & 128 & Adam & 0 & 0.85 & 30000 & 0.0314 & 0.0317 \\
\ourmethod & ILR & True & 128 & Adam & $10^{-5}$ & 0.95 & 120000 & 0.0362 & 0.0384 \\
\ourmethod & ILR & True & 128 & Adam & 0 & 0.95 & 30000 & 0.0387 & 0.0419 \\
\ourmethod & ILR & True & 128 & Adam & $10^{-5}$ & 0.85 & 120000 & 0.0387 & 0.039 \\
\ourmethod & ILR & True & 64 & Adam & $10^{-5}$ & 0.95 & 190000 & 0.0405 & 0.042 \\
\ourmethod & ILR & True & 64 & Adam & 0 & 0.95 & 40000 & 0.0588 & 0.0581 \\
\ourmethod & ILR & True & 64 & Adam & $10^{-5}$ & 0.85 & 190000 & 0.0612 & 0.0626 \\
\bottomrule
\end{tabular}
    
\end{table}

\begin{figure}[ht]    
    \centering
    \setlength{\tabcolsep}{5pt} 
    \renewcommand{\arraystretch}{0} 
    \begin{tabular}{cccc}
        \begin{subfigure}{0.22\linewidth}
            \includegraphics[width=\linewidth]{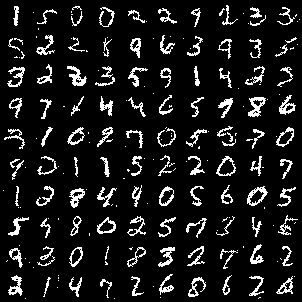}
            \caption{LinearFM}
        \end{subfigure} &
        \begin{subfigure}{0.22\linewidth}
            \includegraphics[width=\linewidth]{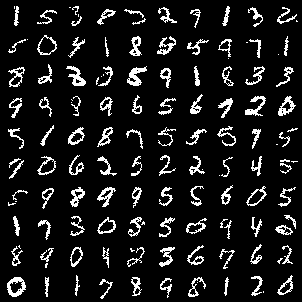}
            \caption{SFM }
        \end{subfigure} &
        \begin{subfigure}{0.22\linewidth}
            \includegraphics[width=\linewidth]{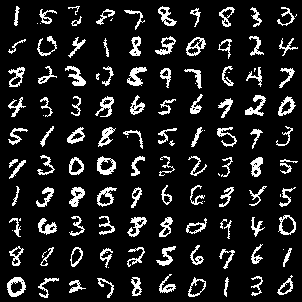}
            \caption{SFM w/ OT}
        \end{subfigure} 
        \\[6pt] 

        \begin{subfigure}{0.22\linewidth}
            \includegraphics[width=\linewidth]{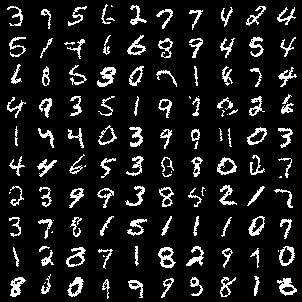}
            \caption{\ourmethod(ILR) }
        \end{subfigure} &
        \begin{subfigure}{0.22\linewidth}
            \includegraphics[width=\linewidth]{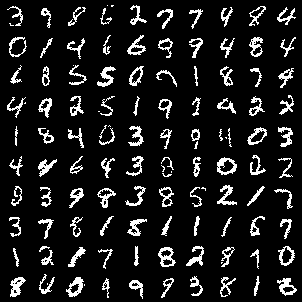}
            \caption{\ourmethod(ILR) w/ OT}
        \end{subfigure} &
        \begin{subfigure}{0.22\linewidth}
            \includegraphics[width=\linewidth]{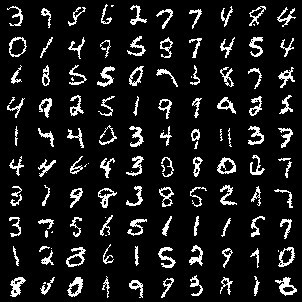}
            \caption{\ourmethod(SB) }
        \end{subfigure} &
        \begin{subfigure}{0.22\linewidth}
            \includegraphics[width=\linewidth]{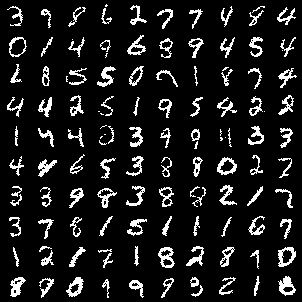}
            \caption{\ourmethod(SB) w/ OT}
        \end{subfigure}
    \end{tabular}
    \caption{Samples from BMNIST from the different methods. LinearFM draws samples of visually lower quality than the rest of the methods.}
    \label{fig:bmnist_samples}
\end{figure}

\begin{table}
    \centering
    \caption{Run time measurements with varying data dimensionality $D$ and fixed batch size $B=512$. All results are reported in milliseconds as a mean with two standard deviations across 1000 iterations.}
    \label{tab:runtimeD}
    \begin{tabular}{c|c|c|c|c|c|c}
    \toprule
        Method & $D=2^2$ (ms) & $D=2^3$ (ms) & $D=2^4$ (ms) & $D=2^5$ (ms) & $D=2^6$ (ms) & $D=2^7$ (ms) \\ \midrule
        SFM & $45 \pm 3$ & $45 \pm 4$ & $48 \pm 3$ & $48 \pm 3$ & $50 \pm 3$ & $54 \pm 4$ \\ 
        \ourmethod (ALR) & $76 \pm 12$ & $74 \pm 10$ & $76 \pm 12$ & $82 \pm 12$ & $88 \pm 5$ & $85 \pm 10$ \\
        \ourmethod (ILR) & $81 \pm 12$ & $84 \pm 12$ & $83 \pm 12$ & $85 \pm 12$ & $88 \pm 6$ & $92 \pm 8$ \\ \bottomrule
    \end{tabular}
\end{table}

\begin{table}
    \centering
    \caption{Run time measurements with varying batch size $B$ and fixed data dimensionality $D=64$. All results are reported in milliseconds as a mean with two standard deviations across 1000 iterations.}
    \label{tab:runtimeB}
    \begin{tabular}{c|c|c|c|c|c|c}
    \toprule
        Method & $B=2^6$ (ms) & $B=2^7$ (ms) & $B=2^8$ (ms) & $B=2^9$ (ms) & $B=2^{10}$ (ms) & $B=2^{11}$ (ms) \\ \midrule
        SFM & $15 \pm 2$ & $18 \pm 1$ & $29 \pm 1$ & $51 \pm 3$ & $93 \pm 10$ & $183 \pm 36$ \\ 
        \ourmethod (ALR) & $13 \pm 4$ & $18 \pm 6$ & $42 \pm 6$ & $89 \pm 6$ & $171 \pm 18$ & $346 \pm 36$ \\
        \ourmethod (ILR) & $13 \pm 4$ & $18 \pm 6$ & $43 \pm 9$ & $83 \pm 12$ & $166 \pm 12$ & $339 \pm 32$ \\\bottomrule
    \end{tabular}
\end{table}

\section{ADDITIONAL EXPERIMENTS} \label{app:additional_exper}

\subsection{Estimation of the categorical probabilities}  \label{app:cat_probs}
We evaluate the accuracy of our estimator for categorical log-probabilities under the same setup as the \textit{Scalability} experiment.
We consider dimensions $D = 2, 2^2, \ldots, 2^8$ and compare the Stick-Breaking (SB) transform against SFM.
The true probabilities $\boldsymbol{p} \in \Delta^D$ and the velocity field is explained in the  Section~\ref{sec:scalability_supp}.

Recall the estimator for the discrete probabilities is given by
$\widehat{\Pr}(C=k) = \frac{q_\theta(\boldsymbol{\mu}^{(k)})}{q_\lambda(\boldsymbol{\mu}^{(k)}|\boldsymbol{e}_k)}$.
The numerator $q_\theta(\cdot)$ is computed using the Euler integrator with $200$ steps, and its divergence term is approximated via the Hutchinson trace estimator.

We compare our estimator $\widehat{\Pr}(C=k)$ to the lower bound proposed by \citet{Cheng2024}.
Figure~\ref{fig:corners2} shows that our estimator closely matches the true probabilities up to $2^5$ categories in terms of KL divergence, whereas the lower bound used by SFM deviates significantly for all $D$.
Similarly, our estimator of $p_1=0.5$ remains accurate up to $2^5$ categories, while the lower bound remains loose across all dimensions and results in numerical errors for high number of categories.
\begin{figure}
    \centering    
    \includegraphics[width=0.4\linewidth]{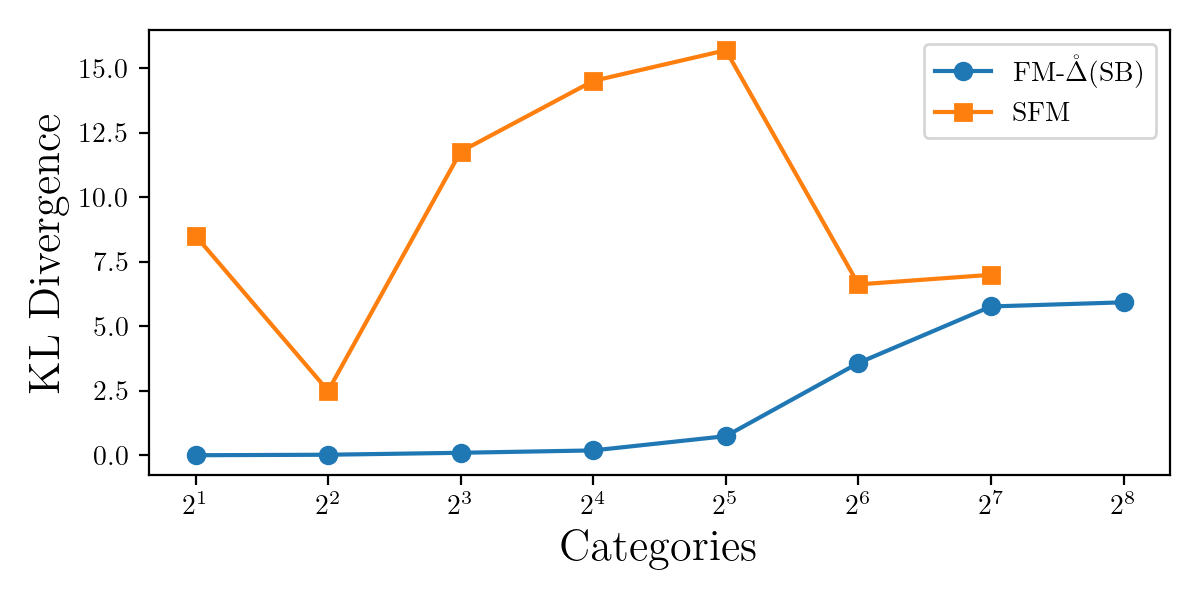}
    \includegraphics[width=0.4\linewidth]{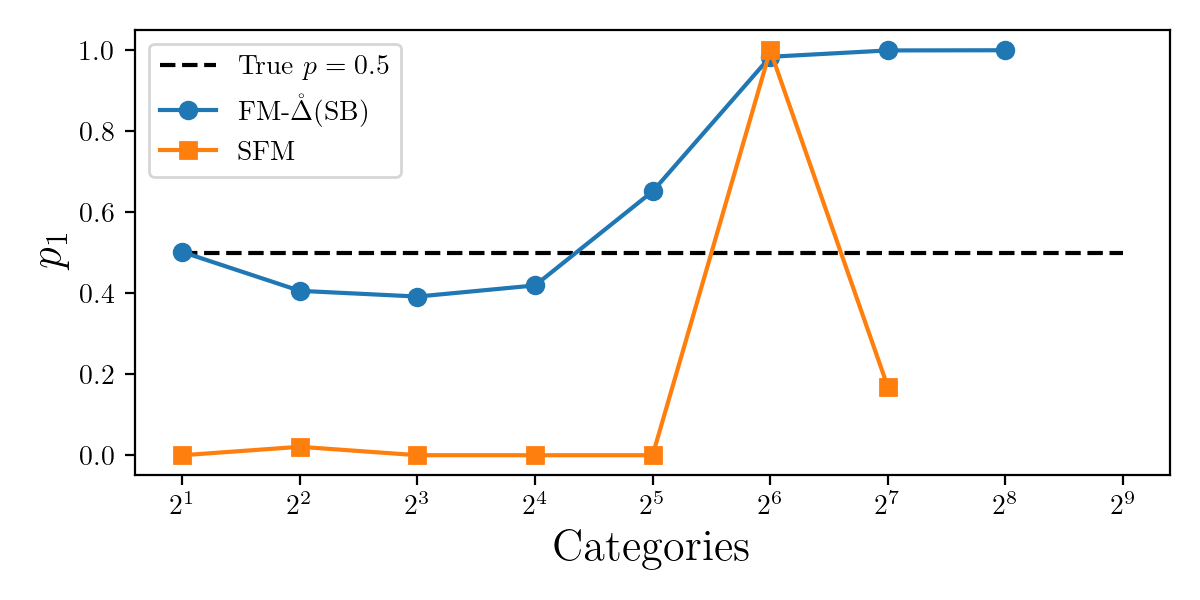}    
    \caption{    
    {Left:} KL divergence between the true categorical probabilities and either the estimation $\widehat{\Pr}(C=k)$ (blue) or lower-bound approximations given by SFM (orange).
{Right:} Estimated value and lower bound of ${p}_1$ compared to the true value $p_1=0.5$.
Missing values correspond to numerical overflows during the computation.}
    \label{fig:corners2}
\end{figure}

\subsection{Effect of the parameters} \label{app:effect_params}

Using the same setup as in Scalability, we train a velocity network across the values $\alpha\in \{1, 10, 100, \infty\}$, $\lambda\in \{0.5, 0.75, 0.99\}$, with and without scaling (see Eq.~\ref{eq:norm_mean}). We only consider the ILR bijection. The case $\alpha=1$ corresponds to the uniform distribution over each  $\arg\max$  region ($x_i>x_j$ for all $i\neq j$), and $\alpha=\infty$ to a deterministic interpolation, namely $ \lambda \boldsymbol{c} + (1-\lambda) \tfrac{1}{K}$. We generate $100,000$ samples $5$ times and plot the mean values with 2 standard deviation bands. 

Figure~\ref{fig:corners3} shows that scaling improves the accuracy when there are only 2 categories, but scaling degrades the performance once the number of categories exceeds $2^5$.
The uniform distribution ($\alpha=1$) is slightly more accurate for  $\lambda \in \{0.5, 0.75\}$ than $\lambda=0.99$. 
The performance of the method is more sensitive to the the values of $\alpha$ and $\lambda$ as $K$ increases, but we intentionally did not attempt fine-tuning $\alpha$ and $\lambda$ to maximize the empirical performance, since we want an easy-to-use pipeline.

\begin{figure}
    \centering    
    \includegraphics[width=0.49\linewidth]{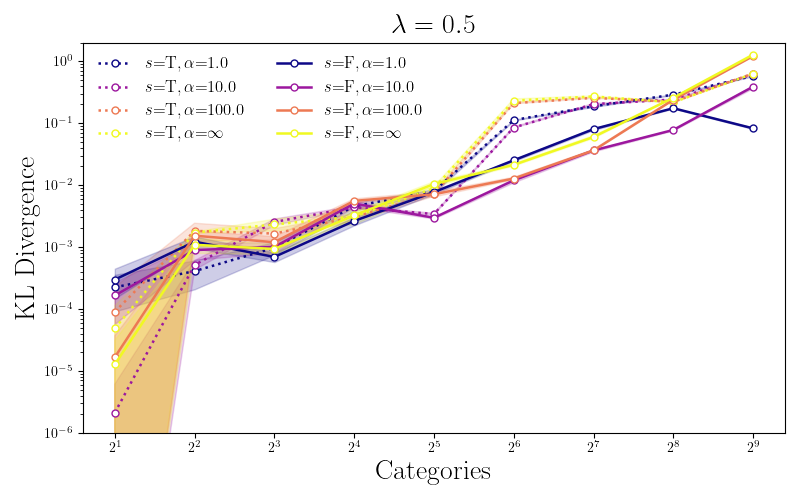}
    \includegraphics[width=0.49\linewidth]{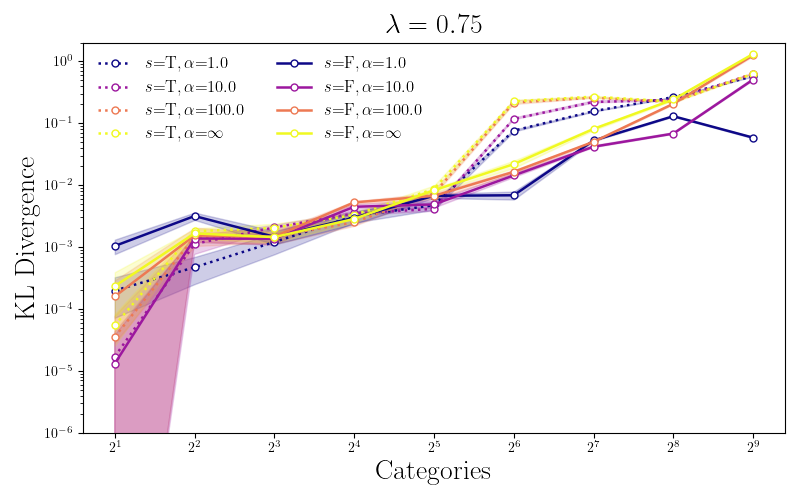}
    \includegraphics[width=0.49\linewidth]{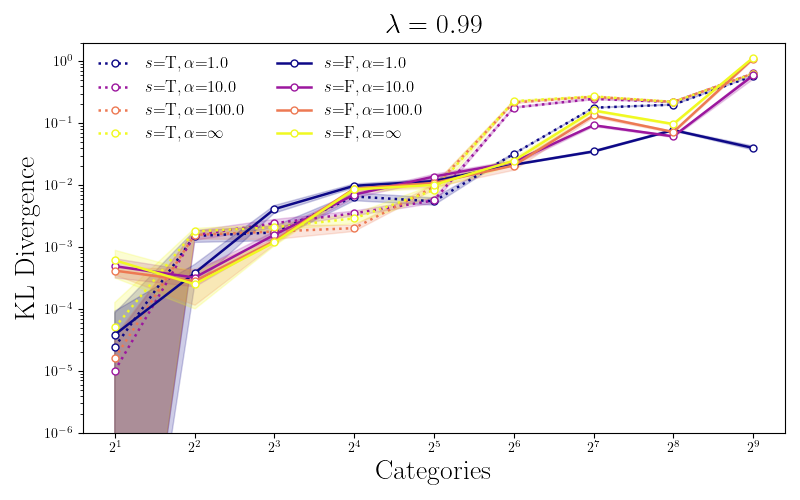}    
    \caption{The KL divergence between the true and estimated categorical probabilities for $\alpha\in \{1, 10, 100, \infty\}$, $\lambda\in \{0.5, 0.75, 0.99\}$, with and without scaling. The bands are two standard deviations away from the mean.     
    }
    \label{fig:corners3}
\end{figure}

\subsection{Run Times}

We compare the run times between SFM and \ourmethod (ALR) with the same setup as the \emph{Scalability} experiment. To measure the effect of the number of categories, we consider dimensions  $D = 2^2, \ldots, 2^7$ with a fixed batch size of $B=512$, presented in Table~\ref{tab:runtimeD}. Then, to measure the effect of batch size, we consider batch sizes $B=2^6, 2^7, \ldots, 2^11$ with a fixed $D=64$, presented in Table~\ref{tab:runtimeB}. All the run times are reported with mean and two standard deviations over 1000 training iterations in machines with Xeon E5 2680 v3 2.50GHz CPUs, we do not use GPUs for these measurements.

\subsection{Numerical results for Additive Logratio and Multiplicative Logratio bijections}  \label{app:alr_mlr}

In Section~\ref{sec:bij} we defined ALR and MLR as possible bijections, before introducing our proposed bijections SB and ILR. It would still be useful to clarify how well ALR and MLR would work.

We now ran the Checkerboard, BMNIST and DNA experiments with the ALR and MLR mappings, reporting the results in Table \ref{tab:alr_mlr}. SB remains the best on DNA and Checkerboard and ILR on binarized MNIST, but both ALR and MLR also work relatively well. SB and MLR are highly similar (for BMNIST with 2 classes they are exactly the same map), as they should because the former mostly just stabilizes the computation by centering.

\begin{table} 
\centering
\caption{Numerical results for ALR and MLR bijections.}
\label{tab:alr_mlr}
\begin{tabular}{lccc} 
\hline
Method & Checkerboard & BMNIST & DNA \\
\hline
FM-$\mathring{\Delta}$(ALR)     & 6.4\%      & 4.98 & 0.029 \\
FM-$\mathring{\Delta}$(ALR)w/OT & --         & 5.22 & 0.037 \\
FM-$\mathring{\Delta}$(MLR)     & 5.7\%      & 4.93 & 0.027 \\
FM-$\mathring{\Delta}$(MLR)w/OT & --         & 4.51 & 0.024 \\
FM-$\mathring{\Delta}$(SB)      & \textbf{5.4\%} & 4.93 & 0.028 \\
FM-$\mathring{\Delta}$(SB)w/OT  & --         & 4.51 & \textbf{0.021} \\
FM-$\mathring{\Delta}$(ILR)     & 6.8\%      & \textbf{4.36} & 0.026 \\
FM-$\mathring{\Delta}$(ILR)w/OT & --         & 4.57 & 0.022 \\
\hline
\end{tabular}
\end{table}

\end{document}